\title{Near-optimal Bayesian Solution For Unknown Discrete Markov Decision Process}
\author{%
	Aristide Tossou, Christos Dimitrakakis, Debabrota Basu\\
	Department of Computer Science and Engineering\\
	Chalmers University of Technology\\
	G{\"o}teborg, Sweden\\
	\texttt{(aristide,chrdimi,basud)@chalmers.se} 
}
\newcommand{\BetaDis}{\textsf{Beta}}
\newcommand{\Bernoulli}{\textsf{Bern}}
\newcommand{\Binomial}{\textsf{Binom}}
\DeclareMathOperator{\Prob}{\mathbb{P}}
\DeclareMathOperator{\EX}{\mathbb{E}}
\DeclarePairedDelimiter\abs{\lvert}{\rvert}%
\DeclarePairedDelimiter{\ceil}{\lceil}{\rceil}
\DeclarePairedDelimiter\floor{\lfloor}{\rfloor}
\DeclarePairedDelimiter\set\{\}
\DeclarePairedDelimiter\paren()%
\DeclareMathOperator{\sign}{sgn}
\DeclarePairedDelimiterX{\kldivx}[2]{(}{)}{%
  #1\;\delimsize\|\;#2%
}
\newcommand{\kldiv}{D\kldivx}
\newcommand{\oset}[3][0ex]{%
	\mathrel{\mathop{#3}\limits^{
			\vbox to#1{\kern-2\ex@
				\hbox{$\scriptstyle#2$}\vss}}}}
\renewcommand{\Function}[2]{%
	\csname ALG@cmd@\ALG@L @Function\endcsname{#1}{#2}%
	\def\jayden@currentfunction{#1}%
}
\newcommand{\funclabel}[1]{%
	\@bsphack
	\protected@write\@auxout{}{%
		\string\newlabel{#1}{{\jayden@currentfunction}{\thepage}}%
	}%
	\@esphack
}
\newcommand{\algmargin}{\the\ALG@thistlm}
\algnewcommand{\ParState}[1]{\State%
	\parbox[t]{\dimexpr\linewidth-\algmargin}{\strut\hangindent=\algorithmicindent \hangafter=1 #1\strut}}
\newtheorem{theorem}{Theorem}
\newtheorem{lemma}{Lemma}
\newtheorem{proposition}{Proposition}
\newtheorem{fact}{Fact}
\newtheorem{definition}{Definition}
\theoremstyle{definition}
\theoremstyle{definition}
\newcommand{\UCRL}{{UCRL2}}
\newcommand{\KLUCRL}{{KL-UCRL}}
\newcommand{\PSRL}{{PSRL}}
\newcommand{\TSDE}{{TSDE}}
\newcommand{\UCRLBERNSTEIN}{{UCRL-V}}
\newcommand{\UCRLV}{{UCRL-V}}
\newcommand{\UCRLJEFFREY}{{BUCRL}}
\newcommand{\GAMEOFSKILLEASY}{{GameOfSkill-v1}}
\newcommand{\GAMEOFSKILLHARD}{{GameOfSkill-v2}}
\newcommand{\BUCRL}{{BUCRL}}
\newcommand{\TRIALS}{{\ensuremath{40}}}
\newcommand{\BigO}{{\ensuremath{\mathcal{O}}}}
\newcommand{\TilO}{{\ensuremath{\tilde{\mathcal{O}}}}}
\newcommand{\ConfidenceDelta}{{0.05}}
\newcommand{\gain}{V}
\newcommand{\quant}{Q}
\newcommand{\round}{{round}}
\newcommand{\rounds}{{rounds}}
\newcommand{\episode}{episode}
\newcommand{\regret}{\mathrm{Regret}}
\newcommand{\upperb}[1]{\hat{#1}}
\newcommand{\lowerb}[1]{\check{#1}}
\newcommand{\samples}[1]{\bm{#1}}
\DeclareMathOperator\1{\mathds{1}}
\DeclareMathOperator\rem{\%}
\newcommand{\dbcomment}[1]{}
\newcommand{\aricomment}[1]{}
\newcommand*{\addFileDependency}[1]{
  \typeout{(#1)}
  \@addtofilelist{#1}
  \IfFileExists{#1}{}{\typeout{No file #1.}}
}
\newtheorem*{rep@theorem}{\rep@title}
\newcommand{\newreptheorem}[2]{%
	\newenvironment{rep#1}[1]{%
		\def\rep@title{\textbf{#2} \ref{##1}}%
		\begin{rep@theorem}}%
		{\end{rep@theorem}}}
\newcommand{\showfontsize}{\f@size{} pt}
\leq \kldiv{p+x}{p} \leq \frac{x^2}{2(pq-xp/2)} \leq \frac{x^2}{pq}.\]
\leq \frac{x}{n} + \sqrt{\paren*{\frac{x}{n}}\paren*{1-\frac{x}{n}} \frac{y^2}{n}} + \frac{1}{n} \paren*{y^2\paren*{\frac{5}{6} + \sqrt{\frac{7}{12}}} + 2y + 2}
\geq \frac{x}{n} - \sqrt{\paren*{\frac{x}{n}}\paren*{1-\frac{x}{n}} \frac{y^2}{n}} - \frac{1}{n} \paren*{y^2\paren*{\frac{5}{6} + \sqrt{\frac{7}{12}}} + 2y + 2}
\begin{document}
\maketitle

\begin{abstract}
  We tackle the problem of acting in an unknown finite and discrete Markov Decision Process (MDP) for which the expected shortest path from any state to any other state is bounded by a finite number $D$. An MDP consists of $S$ states and $A$ possible actions per state. Upon choosing an action $a_t$ at state $s_t$, one receives a real value reward $r_t$, then one transits to a next state $s_{t+1}$. The reward $r_t$ is generated from a fixed reward distribution depending only on $(s_t, a_t)$ and similarly, the next state $s_{t+1}$ is generated from a fixed transition distribution depending only on $(s_t, a_t)$. The objective is to maximize the accumulated rewards after $T$ interactions. In this paper, we consider the case where the reward distributions, the transitions, $T$ and $D$ are all unknown. We derive the first polynomial time Bayesian algorithm, \BUCRL{} that achieves up to logarithm factors, a regret (i.e the difference between the accumulated rewards of the optimal policy and our algorithm) of the optimal order $\TilO(\sqrt{DSAT})$. Importantly, our result holds with high probability for the worst-case (frequentist) regret and not the weaker notion of Bayesian regret. We perform experiments in a variety of environments that demonstrate the superiority of our algorithm over previous techniques.
  
  Our work also illustrates several results that will be of independent interest. In particular, we derive a sharper upper bound for the KL-divergence of Bernoulli random variables. We also derive sharper upper and lower bounds for Beta and Binomial quantiles. All the bound are very simple and only use elementary functions. 


\end{abstract}

\section{Introduction}
\label{sec:introduction}
Markov Decision Process (MDP) is a framework that is of central importance in computer science. Indeed, MDPs are a generalization of (stochastic) shortest path problems and can thus be used for routing problems \citep{psaraftis2016dynamic}, scheduling and resource allocation problems \citep{gocgun2011markov}. One of its most successful application comes in reinforcement learning where it has been used to achieve human-level performance for a variety of games such as Go \cite{silver2017mastering}, Chess \cite{silver2017masteringchess}. It is also a generalization for online learning problems (such as multi-armed bandit problems) and as such has been used for online advertisement \citep{lu2009showing} and movie recommendations \cite{qin2014contextual}.

\paragraph{Problem Formulation}

In this paper, we focus on the problem of online learning of a near optimal policy for an unknown Markov Decision Process. An MDP consists of $S$ states and $A$ possible actions per state. Upon choosing an action $a_t$ at state $s_t$, one receives a real value reward $r_t$, then one transits to a next state $s_{t+1}$. The reward $r_t$ is generated from a fixed reward distribution depending only on $(s_t, a_t)$ and similarly, the next state $s_{t+1}$ is generated from a fixed transition distribution $p(. | s_t, a_t)$ depending only on $(s_t, a_t)$. The objective is to maximize the accumulated (and undiscounted) rewards after $T$ interactions. An MDP is characterized by a quantity (called $D$) known as the diameter. It indicates an upper bound on the expected shortest path from any state to any other  state. When this diameter (formally defined by Definition \ref{def:diameter}) is finite, the MDP is called \emph{communicating}. 

\begin{definition}[Diameter of an MDP]
\label{def:diameter}
The diameter $D$ of an MDP $M$ is defined as the minimum number of rounds needed to go from one state $s$ and reach any other state $s'$ while acting using some deterministic policy. Formally,
\[D(M) = \max_{s\ne s', s,s' \in \mathcal{S}} \min_{\pi: \mathcal{S} \rightarrow \mathcal{A}} T(s' | s, \pi) \] where $T(s' | s, \pi)$ is the expected number of rounds it takes to reach state $s'$ from $s$ using policy $\pi$.
\end{definition}

 In this paper, we consider the case where the reward distributions $r$, the transitions $p$, $T$ and $D$ are all unknown. %
Given that the rewards are undiscounted, a good measure of performance is the gain, i.e. the infinite
horizon average rewards. The gain of a policy $\pi$ starting from state s is defined by:
\[
\gain(s | \pi) \triangleq \limsup_{T \to \infty}\frac{1}{T} \EX\left[\sum_{t=1}^{T} r(s_t, \pi(s_t)) \mid s_1 = s\right].
\]
\citet{puterman2014markov} shows that there is a policy $\pi^*$ whose gain, $\gain^*$ is greater than that of any other policy. In addition, this gain is the same for all states in a communicating MDP.
We can then characterize the performance of the agent by its regret defined as:
\[
\regret(T) \triangleq \sum_{t=1}^{T} \left(\gain^* - r(s_t, a_t)\right).
\]

Thus our goal is equivalent to obtaining a regret as low as possible.

\paragraph{Related Work}
It has been shown that any algorithm must incur a regret of $\Omega(DSTA)$ in the worst case. \cite{jaksch2010near}. Since the establishment of this lower bound on the regret, there has been numerous algorithms for the problem. They can be classified in two ways: Frequentist and Bayesian. The frequentist algorithms usually construct explicit confidence interval while the Bayesian algorithms start with a prior distribution and uses the posterior derived from Bayes Theorem. Following a long line of algorithms KL-UCRL~\citep{filippi2010optimism}, REGAL.C~\citep{bartlett2009regal}, UCBVI~\citep{azar2017minimax}, SCAL~\citep{fruit2018efficient} the authors of \citep{full_ucrlv_paper} derived a frequentist algorithm that achieved the lower bound up to logarithmic factors.

In contrast, the situation is different for Bayesian algorithms. One of the first to prove theoretical guarantees for posterior sampling is \citet{osband2013more}, for their \PSRL{} algorithm. However, they only consider reinforcement learning problems with a finite and known episode length\footnote{Informally, it is known that the MDP resets to a starting state after a fixed number of steps.} and prove an upper bound of $\BigO(HS\sqrt{TA})$ on the expected Bayesian regret where $H$ is the length of the episode.
\citet{ouyang2017learning} generalises \citet{osband2013more} results to weakly communicating MDP and proves a $\BigO(H_SS\sqrt{TA})$ on the expected Bayesian regret where $H_S$ is a bound on the span of the MDP. Other Bayesian algorithms have also been derived in the litterature however, none of them is able to attain the lower bound for the general communicating MDP considered in this paper. Also many of the previous Bayesian algorithms only provide guarantees about the Bayesian regret (i.e, the regret under the assumption that the true MDP is being sampled from the prior). It was thus an open-ended question whether or not one can design Bayesian algorithms with optimal worst-case regret guarantees\citep{pmlr-v70-osband17a, osband2016posterior}. In this work, we provide guarantees for the worst-case (frequentist) regret. We solve the challenge by designing the first Bayesian algorithm with provable upper bound on the regret that matches the lower bound up to logarithmic factors. Our algorithm start with a prior on MDP and computes the posterior similarly to previous works. However, instead of sampling from the posterior, we compute a quantile from the posterior. We then uses all the MDPs possible under the quantile as a set of statistically plausible MDPs and then follow the same steps as the state-of-the art \UCRLV{} \citep{full_ucrlv_paper}. The idea of using quantiles have already been explored in the algorithm named \emph{Bayes-UCB} \citep{Kaufmann12onbayesian} for multi-armed bandit (a special case of MDP where there is only one single state). Our work can also be considered as a generalization to \emph{Bayes-UCB}.

\paragraph{Our Contributions.} Hereby, we summarise the contributions of this paper that we elaborate in the upcoming sections.
\begin{itemize}

\item We provide a conceptually simple Bayesian algorithm \UCRLJEFFREY{} for reinforcement learning that achieves near-optimal worst case regret. Rather than actually sampling from the posterior distribution, we simply construct upper confidence bounds through Bayesian quantiles.

\item Based on our analysis, we explain why Bayesian approaches are often superior in performance than ones based on concentration inequalities.

\item We perform experiments in a variety of environments that validates the theoretical bounds as well as proves \UCRLJEFFREY{} to be better than the state-of-the-art algorithms. (Section~\ref{sec:experiments})
\end{itemize}
We conclude by summarising the techniques involved in this paper and discussing the possible future works they can lead to (Section~\ref{sec:conclusion}).

\section{Algorithms Description and Analysis}
\label{sec:algorithms}

In this section, we describe our Bayesian algorithm \UCRLJEFFREY{}. We combine Bayesian priors and posterior together with optimism in the face of uncertainty to achieve a high probability upper bound of $\TilO(\sqrt{DSAT})$\footnote{$\TilO$ is used to hide log factors.} on the worst-case regret in any finite communicating MDP. Our algorithm can be summarized as follow:

\begin{enumerate}
	\item Consider a prior distribution over MDPs and update the prior after each observation
	\item Construct a set of statistically plausible MDPs using the set of all MDPs inside a Quantile of the posterior distribution.
	\item Compute a policy (called \emph{optimistic}) whose gain is the maximum among all MDPs in the plausible set. We used a \emph{modified extended value iteration} algorithm derived in \citep{full_ucrlv_paper}.
	\item Play the computed \emph{optimistic} policy for an artificial \episode{} that lasts until the average number of times state-action pairs has been doubled reaches 1. This is known as the \emph{extended doubling trick} \citep{full_ucrlv_paper}.
\end{enumerate} 

They are multiple variants of quantiles definition for MDP (since an unknown MDP can be viewed as a multi-variate random variable). In this paper, we adopt a specific definition of quantiles for multi-variate random variable called \emph{marginal quantiles}. More precisely,
\begin{definition}[Marginal Quantile \cite{babu1989joint}]
  Let $\bm{X} = (\bm{X_1} \ldots \bm{X_m})$ be a multivariate random vector with joint d.f.( distribution function) $F$, the i-th marginal d.f. $F_i$. We denote the ith marginal quantile function by:
  \[\quant_i(F,q) = \inf\{x:F_i(x) \geq q\}, 0 \leq q \leq 1.\]
\end{definition}
Unless otherwise specified, we will refer to marginal quantile as simply \emph{quantile}. For univariate distributions, the subscript $i$ can be omitted, as the quantile and the marginal quantile coincide.

%
%
%

\begin{algorithm}[ht!]
	\caption{\UCRLJEFFREY{}}
	\label{algo:bayes_ucrl}
	\begin{algorithmic}
		\State \textbf{Input: } Let $\mu_1$ the prior distribution over MDPs. $1-\delta$ are confidence level.
		\State \textbf{Initialization: } Set $t \gets 1$ and observe initial state $s_1$
		\State Set $N_k, N_k(s,a), N_{t_k}(s,a)$ to zero for all $k \geq 0$ and $(s,a)$.
		
		\Statex
		
		\For{episodes $k=1, 2, \ldots$}
		
		\ParState{$t_k \gets t$} \ParState{$N_{t_{k+1}}(s, a) \gets N_{t_{k}}(s, a)\; \forall s,a$}
		\Statex
		
		\State \textbf{Compute optimistic policy $\tilde{\pi}_k$:}
				\State \emph{/*Update the bounds on statistically plausible MDPs*/}
				\State $\lowerb{r}(s,a) \gets Q_{s,a, \samples{r}}(\mu_t, \delta^k_r)$  (lower quantile)
				
				\State $\upperb{r}(s,a) \gets Q_{s,a, \samples{r}}(\mu_t, 1-\delta^k_r)$ (upper quantile)
				\ParState{where $Q_{s,a, \samples{r}}$ is the $i$-th marginal quantile function with $i$ the component corresponding to $(s,a)$ for the rewards.}
				
				\State For any $\mathcal{S}_c \subseteq \mathcal{S}$ use:
				\State $\lowerb{p}(\mathcal{S}_c | s,a) \gets Q_{s,a,\mathcal{S}_c, \samples{p}}(\mu_t, \delta^k_p)$  (lower quantile)
				
				\State $\upperb{p}(\mathcal{S}_c |s,a) \gets Q_{s,a,\mathcal{S}_c, \samples{p}}(\mu_t, 1-\delta^k_p)$  (upper quantile)
				
				\ParState{where $Q_{s,a,\mathcal{S}_c, \samples{p}}$ is the $i$-th marginal quantile function with $i$ the component corresponding to $(s,a)$ and the subset $\mathcal{S}_c$ for the transitions.}
				
				\State \emph{/*Find $\tilde{\pi}_k$ with value $\frac{1}{\sqrt{t_k}}$-close to the optimal*/}
				\State $\tilde{\pi}_k \gets \Call{ExtendedValueIteration}{\lowerb{r}, \upperb{r}, \lowerb{p}, \upperb{p}, \frac{1}{\sqrt{t_k}}}$ (Algorithm 2 in \citet{full_ucrlv_paper}.)
				
				\Statex
				\State \textbf{Execute Policy $\tilde{\pi}_k$:}
				
				\While{ $\sum_{s,a} \frac{N_k(s,a)}{\max\{1, N_{t_k}(s,a)\}} < 1$ }
				\Statex

				\State Play action $a_t$ and observe $r_t, s_{t+1}$. Let $r_t \gets \Bernoulli(r_t)$
				
				\State Increase $N_k$ and $N_k(s_t, a_t)$, $N_{t_{k+1}}(s_t, a_t)$ by 1.
				
				\State Update the posterior $\mu_{t+1}$ using Bayes rule.
				
				\State $t \gets t + 1$
				\EndWhile
		\EndFor
	\end{algorithmic}
\end{algorithm}

Our analysis is based on the choice of a specific prior distribution for MDP with bounded rewards.
\paragraph{Prior Distribution}
We consider two different prior distributions. One for computing lower bound on rewards/transitions, that is when computing $\delta$-marginal quantile. One for computing upper bound on rewards/transitions, that is when computing $1-\delta$-marginal quantile.

For the lower bound, we used independent distribution for the rewards and transitions. We also used independent distribution for the rewards of each state-action $(s,a)$. And independent distribution for the transition from any state-action $(s,a)$ to any next subset of states $\mathcal{S}_c$. The prior distribution for any of those components is a beta distribution of parameter $(0,1)$: $\BetaDis(0, 1)$\footnote{Technically, beta distributions are only defined for parameter strictly greater than 0. In this paper, when the parameter $\epsilon$ is 0, we compute the posterior and the quantiles by considering the limit when $\epsilon$ tends to 0. \label{fn:beta_valid}}.

The situation is similar with the upper bound. However, here the prior distribution for any component is a beta distribution of parameter $(1,0)$: $\BetaDis(1, 0)$\footref{fn:beta_valid}.

\paragraph{Posterior Distribution}
Let's start by assuming that the rewards come from the Bernoulli distribution. For the upper bounds, using Bayes rule, the posterior at \round{} $t_k$ are:

For the rewards of any $(s,a)$:  \[\BetaDis(\alpha + \sum_{t\leq t_k: s_t = (s,a)}  r_t,  \beta + N_{t_k}(s,a) - \sum_{t\leq t_k: s_t = (s,a)}  r_t)\]

For the transitions from any $(s,a)$ to any subset of next state $\mathcal{S}_c$ are:

\[\BetaDis(\alpha + \sum_{t\leq t_k: s_t = (s,a)}  p_t,  \beta + N_{t_k}(s,a) - \sum_{t\leq t_k: s_t = (s,a)}  p_t)\]
where $p_t = 1$ if $s_{t+1} \in \mathcal{S}_c$; $p_t = 0$ otherwise.

$\alpha = 1, \beta = 0$ for the upper posteriors and $\alpha = 0, \beta = 1$ for the lower posteriors.

\paragraph{Dealing with non-Bernoulli rewards}
We deal with non-Bernoulli rewards by performing a Bernoulli trials on the observed rewards. In other words, upon observing $r_t$ we used $\Bernoulli(r_t)$ where $\Bernoulli(r_t)$ is a sample from the Bernoulli distribution of parameter $r_t$. This technique is already used in \cite{agrawal2012analysis} and ensures that our prior remain valid.

\paragraph{Quantiles}
When $N_{t_k}(s,a) = 0$ the lower and upper quantiles are respectively $0$ and $1$. When the first parameter of the posterior is $0$, the lower quantile is $0$. When the second parameter of the posterior is $0$, the upper quantile is $1$. In all other cases, the $\delta$ quantile corresponds to the inverse cumulative distribution function of the posterior at the point $\delta$. To achieve a high probability bound of $1-\delta$ on our regret, we used the following parameters respectively for the rewards and transitions $\delta^k_r = \frac{\delta}{4 SA \ln\paren*{2t}}$, $\delta^k_p = \frac{\delta}{8S^2A \ln\paren*{2t}}$, where $1-\delta$ is the desired confidence level of the set of plausible MDPs.

\begin{theorem}[Upper Bound on the Regret of \UCRLJEFFREY{}]\label{thm:bayes_ucrl}
With probability at least $1-\delta$ for any $\delta \in ]0,1[$, any $T \geq 1$, the regret of \UCRLJEFFREY{} is bounded by:
\begin{align*}
\mathcal{R}(T) &\leq
20\cdot \sqrt{\min\{S,\log_2^2 2D\} D T SA\log T \ln \paren*{\frac{B}{\delta}}} +  9DSA\ln \paren*{\frac{B}{\delta}}
\end{align*}
for $B = 9S\sqrt{TDSA}\ln\paren*{TSA}$. \dbcomment{Aristide: Check the constants once?}
\end{theorem}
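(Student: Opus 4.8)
The plan is to follow the by-now-standard regret decomposition for optimistic algorithms (as in UCRL2 and especially \UCRLV{} of \citet{full_ucrlv_paper}), but with the concentration arguments replaced by the Bayesian quantile bounds proved earlier in the paper. First I would establish the \emph{good event}: with probability at least $1-\delta$, for every episode $k$ and every $(s,a)$ the true reward mean $r(s,a)$ lies in $[\lowerb{r}(s,a),\upperb{r}(s,a)]$ and, for every $(s,a)$ and every subset $\mathcal{S}_c\subseteq\mathcal{S}$, the true transition mass $p(\mathcal{S}_c\mid s,a)$ lies in $[\lowerb{p}(\mathcal{S}_c\mid s,a),\upperb{p}(\mathcal{S}_c\mid s,a)]$. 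This is where Lemma~\ref{lemma:beta_quantile_bound} and Lemma~\ref{lemma:beta_quantile_bound_lower} enter: applied with $x=\sum r_t$, $n=N_{t_k}(s,a)$ and $y=\Phi^{-1}(1-\delta^k_r)$ (resp.\ $\delta^k_p$), they show each posterior quantile deviates from the empirical mean $x/n$ by at most $\sqrt{(x/n)(1-x/n)y^2/n}+\frac{1}{n}(y^2(\tfrac56+\sqrt{\tfrac{7}{12}})+2y+2)$. Since the empirical mean itself concentrates around the truth (a Bernoulli/Beta-conjugate argument, or equivalently the classical empirical-Bernstein inequality run in reverse through the quantile bound), the quantile interval contains the truth; a union bound over $(s,a)$, over $\mathcal{S}_c$ (absorbed by the $S^2$ in $\delta^k_p$ — note only the $2S$ nested "interval" subsets are needed, as in \UCRLV{}), and over $t$ via the $\ln(2t)$ factor gives failure probability $\le\delta$. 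The $\sqrt{\Phi^{-1}(1-\delta^k)^2}\asymp\ln(SAt/\delta)$ translation turns $y$ into the $\sqrt{\ln(B/\delta)}$ appearing in the bound.

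Conditioned on the good event, the true MDP $M$ lies in the plausible set, so \textsc{ExtendedValueIteration} returns a policy $\tilde\pi_k$ whose optimistic gain $\tilde V_k \ge \gain^* - 1/\sqrt{t_k}$. The regret of episode $k$ then splits, exactly as in \citet{full_ucrlv_paper}, into: (i) the $1/\sqrt{t_k}$ value-iteration error, which sums to $\TilO(\sqrt{T})$ over episodes; (ii) a telescoping/bias term controlled by the span of the optimistic bias function, which is $\le D$ (diameter bound) or, via the refined argument, $\le \min\{S,\log_2^2 2D\}$-type savings — this is the source of the $\sqrt{\min\{S,\log_2^2 2D\}}$ factor; and (iii) the main term $\sum_k \sum_{t\in \text{episode }k}\big(\text{width of confidence interval at }(s_t,a_t)\big)$ times the span. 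Using the width estimate from Step 1, this is $\sum_{s,a}\sum_k N_k(s,a)\cdot\big(\sqrt{(\cdot)/N_{t_k}(s,a)}+1/N_{t_k}(s,a)\big)$; the extended doubling trick ($\sum_{s,a}N_k(s,a)/\max\{1,N_{t_k}(s,a)\}<1$) guarantees $N_k(s,a)\le N_{t_k}(s,a)$ so the per-episode increment is comparable to $\sqrt{N_{T}(s,a)}$ when summed, and Cauchy–Schwarz over the $SA$ pairs with $\sum_{s,a}N_T(s,a)=T$ yields $\sqrt{SAT}$. Collecting the transition and reward contributions and the span factor gives the leading $\TilO(\sqrt{\min\{S,\log_2^2 2D\}\,DSAT\log T\,\ln(B/\delta)})$, with the doubling trick also contributing $\BigO(DSA\ln(B/\delta))$ episodes each costing $\BigO(D)$, i.e.\ the additive $9DSA\ln(B/\delta)$.

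The main obstacle is Step 1: verifying that the Beta quantile intervals — which are \emph{not} symmetric and are derived from a prior with a degenerate parameter $0$ — genuinely cover the true parameter with the claimed probability, uniformly over all $2S$ relevant transition subsets and all episodes, while keeping the radius small enough that the $1/n$ lower-order terms in Lemmas~\ref{lemma:beta_quantile_bound}–\ref{lemma:beta_quantile_bound_lower} only contribute to the additive $DSA\ln(B/\delta)$ and not to the leading term. Two subtleties need care here: handling the boundary cases $N_{t_k}(s,a)=0$ or a zero posterior parameter (where the quantile is pinned to $0$ or $1$, trivially covering), and ensuring the $\ln(2t)$ in $\delta^k_r,\delta^k_p$ suffices for a union bound over all rounds — this is the standard "peeling"/stopping-time argument that the episode structure makes clean. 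A secondary technical point is importing the span-reduction argument of \UCRLV{} verbatim; since our plausible set is built from the \emph{same} kind of per-subset interval bounds as theirs (just with Bayesian rather than frequentist radii), their modified extended value iteration analysis applies with only the confidence-width inputs changed, so I would state this as a black-box reduction and relegate the bookkeeping to an appendix.
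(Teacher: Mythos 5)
Your overall architecture matches the paper's: the proof is indeed a black-box application of the generic regret analysis of \citet{full_ucrlv_paper}, with exactly two inputs to supply — (i) that the true rewards and transition masses lie between the lower and upper posterior quantiles with high probability, and (ii) bounds on how far those quantiles can deviate from the empirical means, which come from Propositions \ref{lemma:beta_quantile_bound} and \ref{lemma:beta_quantile_bound_lower} together with a bound on $\Phi^{-1}$ (the paper uses equation (15) of \citet{chiani2003new}). Your treatment of (ii) and of the downstream bookkeeping (doubling trick, Cauchy--Schwarz over state-action pairs, span/diameter factor) is consistent with what the paper imports from \UCRLV{}.

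The gap is in (i), the coverage step. You propose to obtain coverage by combining the quantile-deviation bounds with concentration of the empirical mean (``empirical-Bernstein run in reverse''). But Propositions \ref{lemma:beta_quantile_bound} and \ref{lemma:beta_quantile_bound_lower} bound the quantiles \emph{towards} the empirical mean — the upper quantile is not too far above $x/n$, the lower quantile not too far below — which is the wrong direction for coverage: they cap the width of the plausible set but give no guarantee that it is wide enough to contain the true parameter. Concentration of $x/n$ around the truth would only help if you also had a matching \emph{lower} bound on the quantile radius with compatible constants, which you neither state nor have available from the paper's results. The paper sidesteps this entirely via Lemma \ref{lemma:coverage_beta}: with the specific priors $\BetaDis(0,1)$ and $\BetaDis(1,0)$, the posterior quantiles coincide exactly with the one-sided Clopper--Pearson bounds for the Binomial (through the Beta--Binomial duality of Fact \ref{fact_beta_binomial}), whose coverage is at least $1-\delta$ by construction — no further concentration inequality is needed, and this exactness is precisely why those degenerate priors are chosen. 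Without this lemma, or an equivalent exact-coverage argument, your good event is not established and the stated regret bound does not follow.
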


\begin{proof}[Proof]
Our proof is based on the generic proof provided in \citet{full_ucrlv_paper}. To apply that generic proof, we need to show that with high probability the true rewards/transitions of any state-action is contained in the lower and upper quantiles of the Bayesian Posterior. In other words we need to show that the Bayesian quantiles provide exact coverage probabilities. For that we notice that our prior lead to the same confidence interval as the Clopper-Pearson interval (See Lemma \ref{lemma:coverage_beta}). Furthermore, we need to provide upper and lower bound for the maximum deviation of the Bayesian posterior quantiles from the empirical values. This is a direct consequence of Proposition \ref{lemma:beta_quantile_bound} and \ref{lemma:beta_quantile_bound_lower}.
\end{proof}

The following results were all useful in establishing our main result in Theorem \ref{thm:bayes_ucrl}. Our main contribution in Proposition \ref{theo:kl_bounds} is the upper bound (the first term of the upper bound) for the KL-divergence of two bernoulli random variables. The last term of the upper bound is a direct derivation from the upper bounds in \citep{dragomir2000some}. Our result in Proposition \ref{theo:kl_bounds} shows a factor of $2$ improvement in the leading term of the upper bound. The KL divergence of Bernoulli random is useful for many online learning problems and we used it here to bound the quantile of the Binomial distributions in term of simple functions.

\begin{repproposition}{theo:kl_bounds}[Bernoulli KL-Divergence]
\input{proposition:kl_bounds}
\end{repproposition}
\begin{proof}[Proof Sketch]
The main idea to prove the upper bound is by studying the sign of the function in $x$ obtained by taking the difference of the KL-divergence and the upper bound. We used Sturm' theorem to basically show that this function starts as a decreasing function then after a point becomes increasing for the remaining of its domain. This together with the observation that at the end of its domain the function is non-positive concludes our proof. Full detailed are available in the appendix.
\end{proof}

Proposition \ref{lemma_binomial_quantile_bounds} provides tight lower and upper bound for the quantile of the binomial distribution in the same simple form as Bernstein inequalities. Binomial distributions and their quantiles are useful for a lot of applications and we use it here to derive the bounds for the quantile from a Beta distribution in Proposition \ref{lemma:beta_quantile_bound} and \ref{lemma:beta_quantile_bound_lower}.
\begin{repproposition}{lemma_binomial_quantile_bounds}[Lower and Upper bound on the Binomial Quantile]
\input{proposition:binom_bounds}
\end{repproposition}
\begin{proof}[Proof Sketch]
We used the tights bounds for the cdf of Binomial in \cite{zubkov2013complete}. We inverted those bounds and then use the upper and lower bound for KL divergence in Proposition \ref{theo:kl_bounds} to conclude. Full detailed is available in the appendix.
\end{proof}

Proposition \ref{lemma:beta_quantile_bound} and \ref{lemma:beta_quantile_bound_lower} provides lower and upper bound for the Beta quantiles in term of simple functions similar to the one for Bernstein inequalities. We used it to prove our main result in Theorem \ref{thm:bayes_ucrl}.

\begin{repproposition}{lemma:beta_quantile_bound}[Upper bound on the Beta Quantile]
\input{lemma:beta_quantile_bound}
\end{repproposition}
\begin{proof}[Proof Sketch]
These bounds comes directly from the relation between Beta and Binomial cdfs. We apply Proposition \ref{lemma_binomial_quantile_bounds} which gives a bounds for the quantile $p$ in term of $p(1-p)$. We then applies again Proposition \ref{lemma_binomial_quantile_bounds} to bound $p(1-p)$ in term of $\frac{x}{n} \paren*{1-\frac{x}{n}}$. Full proof is available in the appendix.
\end{proof}

\begin{repproposition}{lemma:beta_quantile_bound_lower}[Lower bound on the Beta Quantile]
\input{lemma:beta_quantile_bound_lower}
\end{repproposition}
\begin{proof}[Proof Sketch]
The proof comes almost exclusively by performing the same steps as in the proof of Proposition \ref{lemma:beta_quantile_bound}.
\end{proof}

\section{Experimental Analysis} \label{sec:experiments}
We empirically evaluate the performance of \UCRLJEFFREY{} in comparison with that of \UCRLV{} \citep{full_ucrlv_paper}, \KLUCRL{} \citep{filippi2010optimism} and \UCRL{} \citep{jaksch2010near}. We also compared against \TSDE{} \citep{ouyang2017learning} which is a variant of posterior sampling for reinforcement learning suited for infinite horizon problems. We used the environments \emph{Bandits}, \emph{Riverswim}, \emph{\GAMEOFSKILLEASY{}}, \emph{\GAMEOFSKILLHARD{}} as described in \citet{full_ucrlv_paper}.
We also eliminate unintentional bias and variance in the exact way described in \cite{full_ucrlv_paper}.
Figure~\ref{fig:regrets} illustrates the evolution of the average regret along with confidence region (standard deviation). Figure~\ref{fig:regrets} is a log-log plot where the ticks represent the actual values.
\paragraph{Experimental Setup.} The confidence hyper-parameter $\delta$ of \UCRLV{}, \KLUCRL{}, and \UCRL{} is set to $\ConfidenceDelta{}$. 
\TSDE{} is initialized with independent $\BetaDis(\frac{1}{2}, \frac{1}{2})$ priors for each reward $r(s,a)$ and a Dirichlet 
prior with parameters $(\alpha_1, \ldots \alpha_S)$ for the transition functions $p(.|s,a)$, where $\alpha_i = \frac{1}{S}$. We plot the average regret of each algorithm over $T = 2^{24}$ \rounds{} computed using \TRIALS{} independent trials. 

\paragraph{Implementation Notes on \UCRLJEFFREY{}}
We note here that the quantiles to any subset of next states can be computed efficiently with of a complexity linear in $SA$ and not the naive exponential complexity. This is because 
The posterior to any subset of next states only depend on the sum of the rewards of its constituent.

\begin{figure*}[t!]
	\centering
	\begin{subfigure}{0.45\textwidth}
		\includegraphics[width=\textwidth]{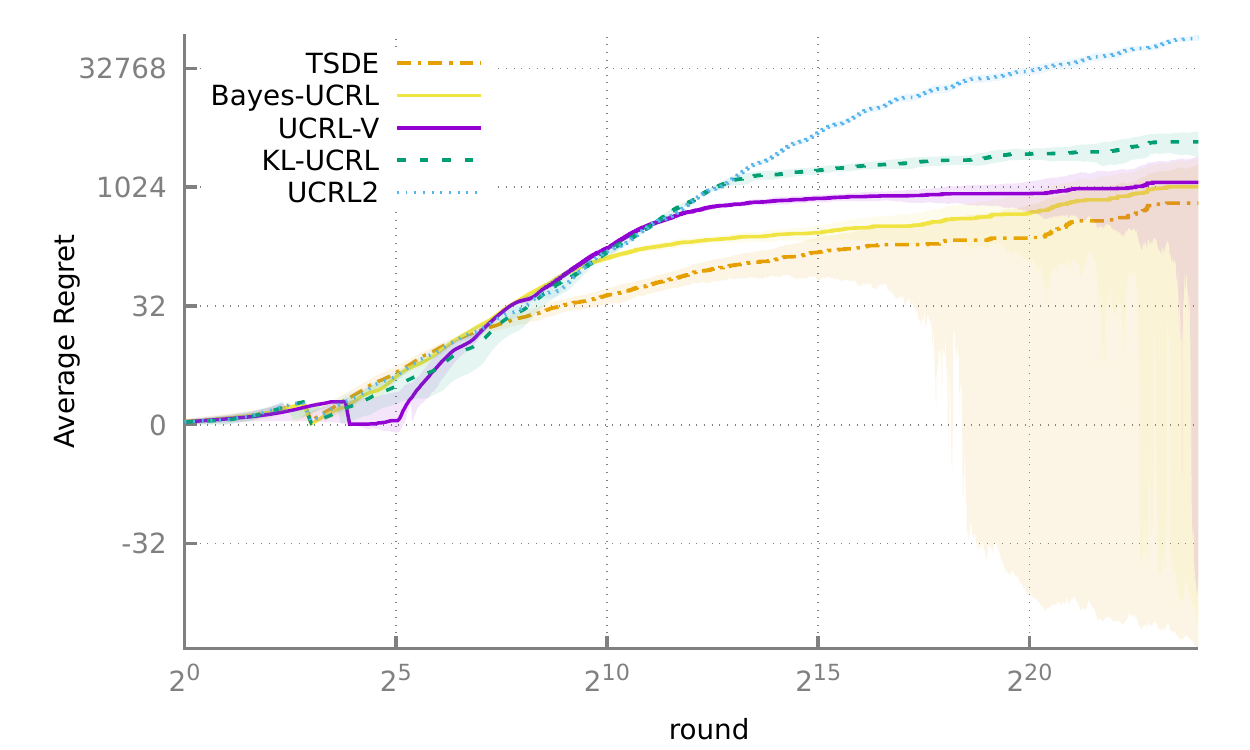}
		\caption{RiverSwim}
		\label{fig:RiverSwim}
	\end{subfigure}%
	\begin{subfigure}{0.45\textwidth}
		\includegraphics[width=\textwidth]{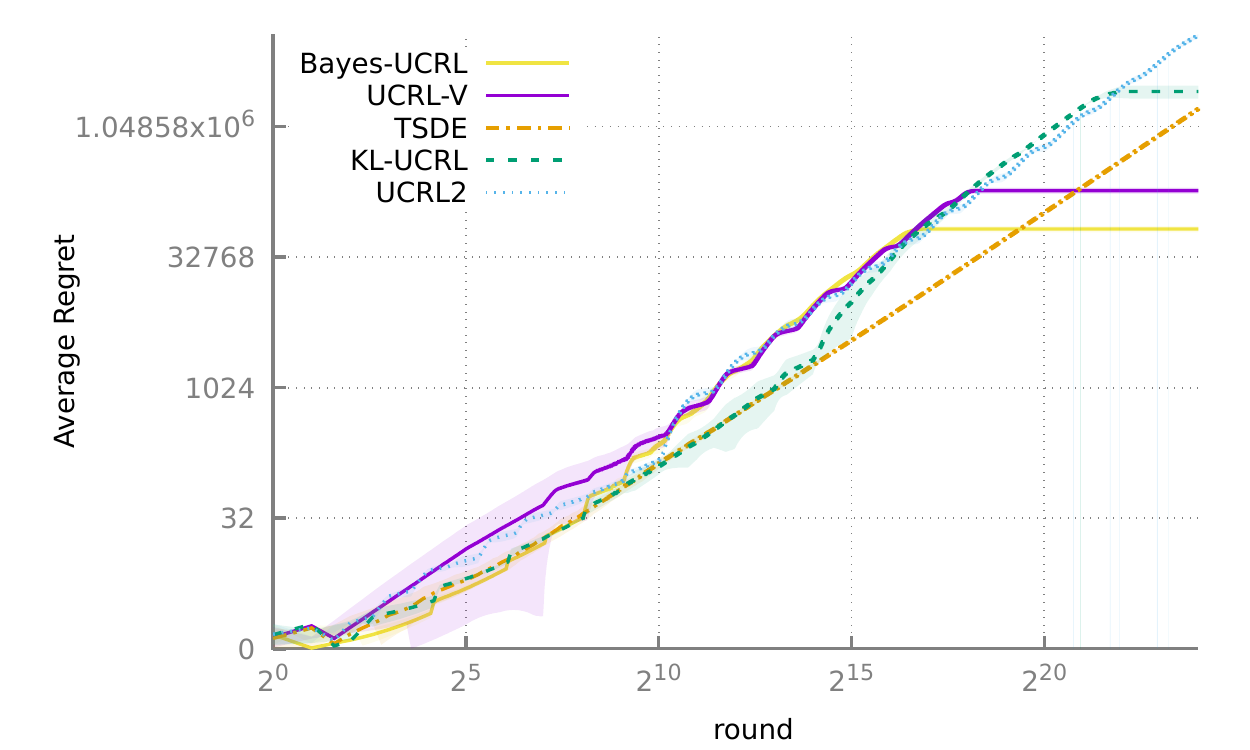}
		\caption{GameOfSkill-v1}
		\label{fig:GameOfSkill-t}
	\end{subfigure}\\
	\begin{subfigure}{0.45\textwidth}
		\includegraphics[width=\textwidth]{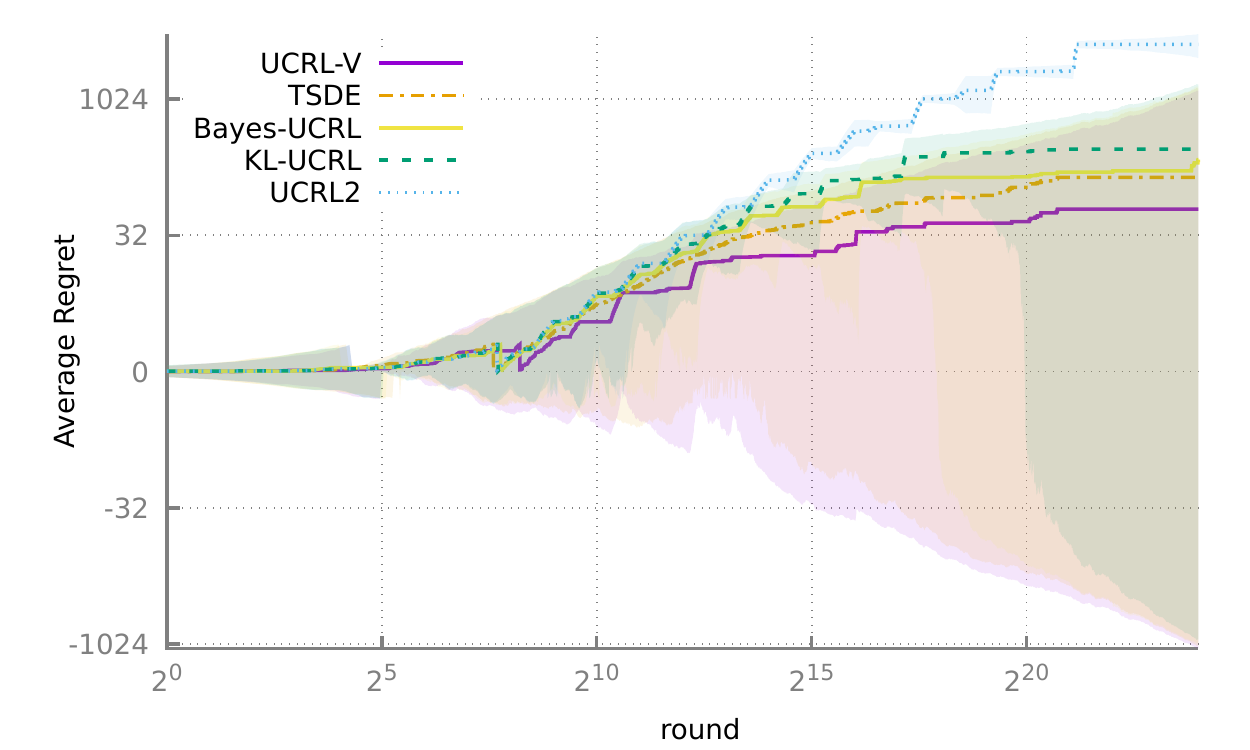}
		\caption{Bandits}
		\label{fig:bandit}
	\end{subfigure}%
	\begin{subfigure}{0.45\textwidth}
		\includegraphics[width=\textwidth]{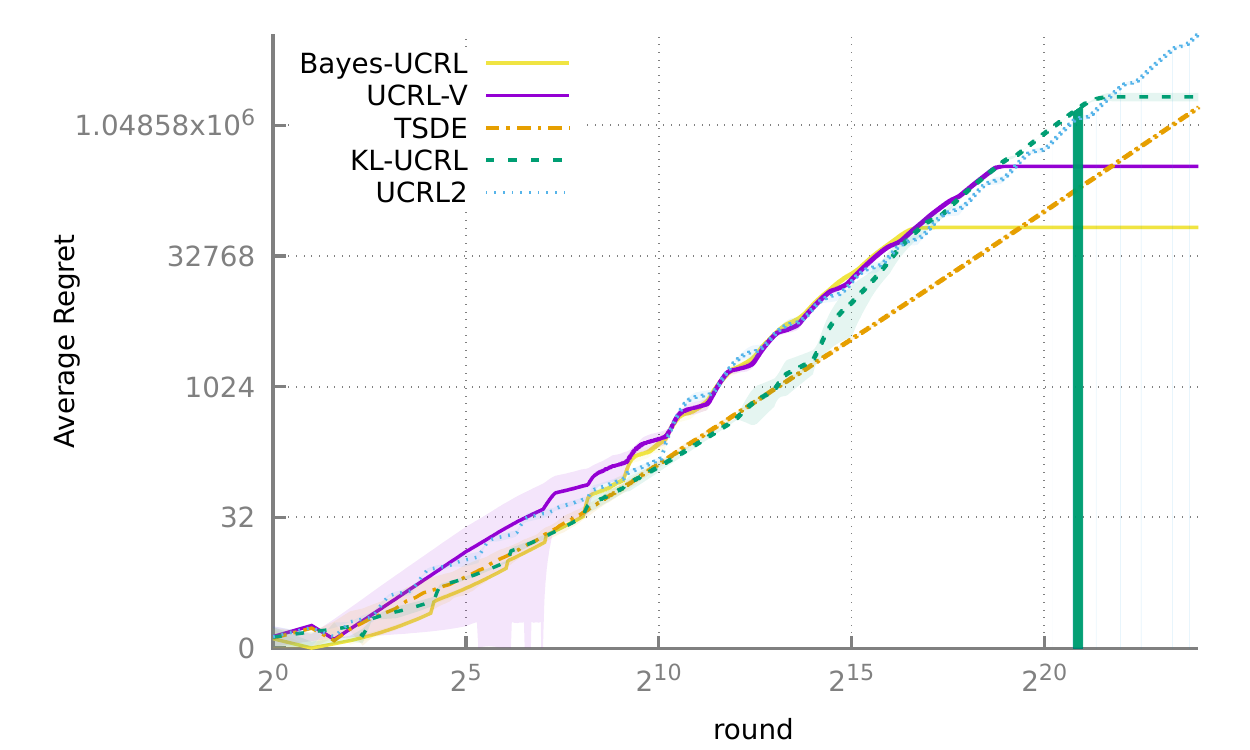}
		\caption{GameOfSkill-v2}
		\label{fig:GameOfSkill-f}
	\end{subfigure}
	\caption{Time evolution of average regret for \UCRLJEFFREY{}, \UCRLV{}, \TSDE{}, \KLUCRL{}, and \UCRL{}.} 
	\label{fig:regrets}
\end{figure*}

\paragraph{Results and Discussion.}

We can see that \UCRLJEFFREY{} outperforms \UCRLBERNSTEIN{} over all environments except in the Bandits one. This is in line with the theoretical regret whereby we can see that using the Bernstein bound is a factor times worse than the Bayesian quantile. Note that this is not an artifact of the proof. Indeed, pure optimism can be seen as using the \emph{proof} inside the algorithm whereas the Bayesian version provides a general algorithm that has to be proven separately. Consequently, the actual performance of the Bayesian algorithm can often be much better than the bounds provided.

\section{Conclusion}
\label{sec:conclusion}
In conclusion, using Bayesian quantiles lead to an algorithm with strong performance while enjoying the best of both frequentist and Bayesian view. It also provides a conceptually simple and very general algorithm for different scenarios. Although we were only able to prove its performance for bounded rewards in $[0,1]$ and a specific prior, we believe it should be possible to provide proof for other rewards distribution and prior such as Gaussian. As future work, it would be interesting to explore how one can re-use the idea of \UCRLJEFFREY{} for non-tabular settings such as with linear function approximation or deep learning.

\bibliography{rlfastposteriorsampling}
\bibliographystyle{icml2019}

\appendix
\section{Proofs}

\subsection{Proof of Theorem \ref{thm:bayes_ucrl}}
Our proof is a direct application of the generic proof provided in Section B.2 of \citet{full_ucrlv_paper}. To use that generic proof we need to show 
that with high probability the true rewards/transitions of any state-action is contained in the lower and upper interval of the Bayesian Posterior. This is a direct consequence of Lemma \ref{lemma:coverage_beta} and the fact that our posterior matches the Beta Distribution used in Lemma \ref{lemma:coverage_beta}. 

Furthermore, we need to provide lower and upper bounds for the maximum deviation of the Bayesian posteriors from their empirical values. This comes directly from using  Proposition \ref{lemma:beta_quantile_bound} and Proposition \ref{lemma:beta_quantile_bound_lower}, and bounding $\Phi^{-1}$ using  equation (15) in \citet{chiani2003new}.

\begin{lemma}[Coverage probability of Beta Quantile for Bernoulli random variable]
	\label{lemma:coverage_beta}
  Let $\bm{X}_1, \ldots \bm{X}_n$ be $n$ independent Bernoulli random variable with common parameter $\mu$ such that $0 < \mu < 1$ and $n \geq 1$. Let $\bm{X} = \sum_{i=1}^{n} \bm{X}_i$ denote the corresponding Binomial random variable.
  Let $U_{1-\delta}(\bm{X})$ the (random) $1-\delta$th quantile of the distribution $\BetaDis(\bm{X} + 1, n-\bm{X})$ and $U_{\delta}(\bm{X})$ the  $\delta$th quantile of the distribution $\BetaDis(\bm{X}, n-\bm{X}+1)$.
  If $0 < \bm{X} < n$, we have:
  \[
    \Prob\left[U_{\delta}(\bm{X}) \leq \mu \leq U_{1-\delta}(\bm{X}) | \mu\right] \geq 1-2\delta.
  \]
\end{lemma}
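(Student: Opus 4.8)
The plan is to recognize Lemma~\ref{lemma:coverage_beta} as the one-sided conservativeness of the Clopper--Pearson interval, and to prove it via the classical Beta--Binomial CDF identity together with the elementary ``probability integral transform'' inequality for CDFs. It suffices to establish the two one-sided bounds
$$\Prob[\mu > U_{1-\delta}(\bm X) \mid \mu] \le \delta \qquad\text{and}\qquad \Prob[\mu < U_{\delta}(\bm X) \mid \mu] \le \delta,$$
since a union bound over these two miscoverage events then yields $\Prob[U_\delta(\bm X) \le \mu \le U_{1-\delta}(\bm X) \mid \mu] \ge 1 - 2\delta$.

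First I would recall the identity relating the regularized incomplete beta function $I_x(a,b)$, i.e.\ the CDF of $\BetaDis(a,b)$ at $x$, to the Binomial CDF: for integers $0 \le k \le n-1$ and any $x \in [0,1]$ one has $I_x(k+1, n-k) = \Prob[\bm B_x \ge k+1] = 1 - \Prob[\bm B_x \le k]$ with $\bm B_x \sim \Binomial(n,x)$, and symmetrically, for $1 \le k \le n$, $I_x(k, n-k+1) = \Prob[\bm B_x \ge k]$. For the upper tail, fix $\mu$ and condition on $\bm X = k$. When $0 \le k \le n-1$ both parameters of $\BetaDis(k+1, n-k)$ are positive, so $x \mapsto I_x(k+1, n-k)$ is continuous and strictly increasing on $[0,1]$; since $0 < 1-\delta < 1$, the quantile $U_{1-\delta}(k)$ is the unique point where this function equals $1-\delta$, whence $\mu > U_{1-\delta}(k) \iff I_\mu(k+1,n-k) > 1-\delta \iff F_\mu(k) < \delta$, where $F_\mu$ is the CDF of $\Binomial(n,\mu)$. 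When $k = n$, the algorithm's convention sets $U_{1-\delta}(n) = 1 > \mu$, so the bad event cannot occur there. Hence $\Prob[\mu > U_{1-\delta}(\bm X) \mid \mu] \le \Prob[F_\mu(\bm X) < \delta]$, and it remains to invoke the generic fact that $\Prob[F(\bm Y) < \delta] \le \delta$ for any integer-valued $\bm Y$ with CDF $F$: letting $t$ be the least integer with $F(t) \ge \delta$ (which exists since $F(n) = 1 \ge \delta$), monotonicity of $F$ gives $\{F(\bm Y) < \delta\} = \{\bm Y \le t-1\}$, an event of probability $F(t-1) < \delta$.

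The lower-tail bound is entirely symmetric. Conditioning on $\bm X = k$ with $1 \le k \le n$, continuity and strict monotonicity of $x \mapsto I_x(k, n-k+1)$ make $U_\delta(k)$ the unique solution of $I_x(k,n-k+1) = \delta$, so $\mu < U_\delta(k) \iff \Prob[\bm B_\mu \ge k] < \delta$; the case $k = 0$ is excluded by the convention $U_\delta(0) = 0 < \mu$. Writing $\bar F_\mu(k) = \Prob[\bm B_\mu \ge k]$, which is non-increasing with $\bar F_\mu(0) = 1 \ge \delta$ and $\bar F_\mu(n+1) = 0 < \delta$, and letting $s$ be the greatest integer with $\bar F_\mu(s) \ge \delta$, monotonicity gives $\{\bar F_\mu(\bm X) < \delta\} = \{\bm X \ge s+1\}$, of probability $\bar F_\mu(s+1) < \delta$. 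Combining the two one-sided bounds with the union bound completes the argument.

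I expect the main obstacle to be bookkeeping rather than conceptual: getting the parameters in the Beta--Binomial identity exactly right, and being scrupulous about strict versus non-strict inequalities when translating ``$\mu$ exceeds the $(1-\delta)$-quantile'' into a statement about the Binomial CDF. This is precisely where the continuity (hence invertibility) of the Beta CDF on $(0,1)$ for strictly positive parameters is essential, and where the degenerate realizations $\bm X \in \{0, n\}$ — handled by the algorithm's conventions $U_\delta = 0$ and $U_{1-\delta} = 1$ — must be checked separately so that they contribute nothing to either miscoverage probability.
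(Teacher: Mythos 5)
Your proof is correct, and it rests on the same identification the paper makes --- that the Beta quantiles $U_{\delta}(\bm X)$ and $U_{1-\delta}(\bm X)$ are exactly the one-sided Clopper--Pearson bounds, each conservative at level $\delta$, combined by a union bound --- but you carry the argument out from first principles where the paper simply cites \citet{thulin2014cost} for the equivalence and for the coverage ``by construction.'' Concretely, you substitute for that citation the Beta--Binomial CDF identity $I_x(k+1,n-k)=\Prob[\Binomial(n,x)\geq k+1]$ (and its analogue for $\BetaDis(k,n-k+1)$), the continuity and strict monotonicity of the Beta CDF to convert ``$\mu$ exceeds the quantile'' into a statement about the Binomial tail at $\mu$, and the discrete probability-integral-transform bound $\Prob[F(\bm Y)<\delta]\leq\delta$ for an integer-valued $\bm Y$; all three steps are stated and used correctly, including the strict-versus-weak inequality bookkeeping at the quantile. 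What your route buys is a self-contained proof that does not lean on an external characterization of Clopper--Pearson, and an explicit treatment of the boundary realizations $\bm X\in\{0,n\}$ via the algorithm's conventions $U_{\delta}=0$, $U_{1-\delta}=1$ (which the paper sidesteps through the hypothesis $0<\bm X<n$); what the paper's route buys is brevity, since the conservativeness of the one-sided Clopper--Pearson interval is a standard fact it can invoke wholesale.
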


\begin{proof}
Since each $\bm{X_i}$ is a Bernoulli random variable with parameter $\mu$, then $\bm{X} = \sum_{i=1}^{n} \bm{X}_i$ is a Binomial random variable with parameter $(n, \mu)$.
According to \citet{thulin2014cost} equation (4)
 the quantile of the Beta distribution used in this lemma corresponds exactly to the upper one sided Clopper–Pearson interval (for Binomial distribution) whose coverage probability is at least $1-\delta$ by construction \citep{thulin2014cost}. The same argument holds for the lower one sided Clopper–Pearson interval. Combining them concludes the proof.
\end{proof}

\begin{proposition}[Lower and Upper bound on the Binomial Quantile]
  \label{lemma_binomial_quantile_bounds}
\input{proposition:binom_bounds}
\end{proposition}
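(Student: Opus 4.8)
The plan is to convert the statement about the quantile $\quant(\Binomial(n,p),1-\delta)$ into a statement about the Binomial cdf and then invoke sharp two-sided cdf bounds. Recall that $\quant(\Binomial(n,p),1-\delta) \le m$ iff $\Prob[\bm{X}_{n,p} \le m] \ge 1-\delta$, and $\quant(\Binomial(n,p),1-\delta) \ge m$ iff $\Prob[\bm{X}_{n,p} \le m-1] < 1-\delta$. So for the upper bound it suffices to exhibit an integer $m \le \ceil*{np + C_u(p,\Phi^{-1}(1-\delta))}$ with $\Prob[\bm{X}_{n,p}\le m] \ge 1-\delta$, and dually for the lower bound. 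The tool I would use is the Zubkov--Serov \cite{zubkov2013complete} tail bounds, which sandwich $\Prob[\bm{X}_{n,p} \le m]$ (for $m \ge np$) between $\Phi$ evaluated at $\pm\sqrt{2n\,\kldiv{m/n}{p}}$ up to the appropriate sign — concretely, something of the form $\Phi\big(\sign(m/n-p)\sqrt{2n\,\kldiv{m/n}{p}}\big)$ on one side and a similar expression on the other. Writing $\tau = m/n$ and $\delta$-quantile level $y = \Phi^{-1}(1-\delta)$, the condition $\Prob[\bm{X}_{n,p}\le m]\ge 1-\delta$ is then implied by $2n\,\kldiv{\tau}{p} \ge y^2$ (with $\tau > p$), i.e. $\kldiv{\tau}{p} \ge y^2/(2n)$.

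Next I would feed in Proposition~\ref{theo:kl_bounds}. Setting $x = \tau - p$, the proposition's \emph{lower} bound on $\kldiv{p+x}{p}$, namely $\frac{x^2}{2(pq + x(q-p)/3)}$, lets me replace the transcendental inequality $\kldiv{\tau}{p}\ge y^2/(2n)$ by the sufficient algebraic inequality $\frac{x^2}{2(pq + x(q-p)/3)} \ge \frac{y^2}{2n}$, i.e. $n x^2 \ge y^2\big(pq + x(q-p)/3\big)$. This is a quadratic in $x$: $n x^2 - \frac{y^2(q-p)}{3} x - y^2 pq \ge 0$. Solving for the positive root and using $\sqrt{A+B}\le \sqrt A + \sqrt B$-type manipulations (and noting $q - p \le 1-2p$... actually $q-p = 1-2p$) gives exactly the threshold $x \le \sqrt{y^2[npq + (1-2p)^2y^2/36]} + (1-2p)y^2/6 =: C_u(p,y)$ as the place where the quadratic becomes nonnegative; hence $m = \ceil*{np + C_u(p,y)}$ works, and the trivial cap $m \le n$ gives the $\min$ with $n(1-p)$. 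For the lower bound I would argue symmetrically: I need $\Prob[\bm{X}_{n,p} \le m-1] < 1-\delta$, use the \emph{other} Zubkov--Serov inequality, and apply the \emph{upper} KL bound $\kldiv{p+x}{p}\le \frac{x^2}{2(pq - xp/2)}$ from Proposition~\ref{theo:kl_bounds}; this produces the quadratic $n x^2 - \frac{y^2 p}{... }$ wait — more precisely $\frac{x^2}{2(pq - xp/2)} \le \frac{y^2}{2n}$ rearranges to $n x^2 \le y^2(pq - xp/2)$, i.e. $n x^2 + \frac{y^2 p}{2} x - y^2 pq \le 0$, whose positive root yields $x \le \sqrt{y^2[npq + p^2y^2/16]} - py^2/4$; the extra $-1$ in $C_l$ and the floor/ceiling bookkeeping come from passing between $m$ and $m-1$, and the $\max\{0,\cdot\}$ and $\min\{n(1-p)-1,\cdot\}$ handle the degenerate ranges.

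The main obstacle I anticipate is getting the Zubkov--Serov bounds oriented correctly and verifying that the direction of each cdf inequality matches the direction in which I am applying the KL bound — i.e. that for the upper quantile bound I genuinely want a \emph{lower} bound on the cdf and hence a \emph{lower} bound on the KL divergence (so that a modest $\kldiv{}{}$ still clears the threshold), and vice versa for the lower quantile. A secondary, purely computational nuisance is the algebra of solving the two quadratics and simplifying the discriminants into the stated closed forms $C_u, C_l$, together with carefully tracking the $\pm 1$ losses from the floor and ceiling when translating between ``$\le m$'' and ``$\le m-1$'' events; none of this is deep, but it is where sign errors would creep in. I would relegate those calculations to the appendix and in the body only record the reduction: cdf bounds $\Rightarrow$ KL threshold $\Rightarrow$ Proposition~\ref{theo:kl_bounds} $\Rightarrow$ quadratic in $x$ $\Rightarrow$ $C_u, C_l$.
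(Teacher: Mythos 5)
Your proposal follows essentially the same route as the paper's proof: invert the Zubkov--Serov cdf bounds of \cite{zubkov2013complete}, apply the lower KL bound of Proposition~\ref{theo:kl_bounds} for the upper quantile and the upper KL bound for the lower quantile, and solve the resulting quadratics in $x$ to obtain $C_u$ and $C_l$, with the same orientation of every inequality. The only difference is that you defer the sign and boundary bookkeeping, which the paper settles explicitly via the degenerate cases ($p\in\{0,1\}$, $n=0$, $1-\delta=0.5$) and the fact that the Binomial median lies in $[\floor{np},\ceil{np}]$, so your argument is correct in substance.
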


\begin{proof}
Using basic computation, we can verify that the bounds hold trivially for $p=0$, for $p=1$ and $n=0$. Furthermore, it is known that any median $m$ of the binomial satisfies $\floor{np} \leq m \leq \ceil{np}$ \citep{kaas1980mean}. So, our bounds also holds for $1-\delta = 0.5$. 
As a result, we can focus the proof on the case where $0 < p < 1$, $n > 0$ and $0.5 \leq 1-\delta < 1$.

From equation (1) in \cite{zubkov2013complete} we have:
\begin{align}
&\Phi\left(\sign(\frac{k}{n}-p)\sqrt{2n\kldiv{\frac{k}{n}}{p}}\right)\notag\\
& \leq \Prob\{\bm{X}_{n,p} \leq k\}\label{eq:binom_cdf_bound}\\
& \leq \Phi\left(\sign(\frac{k+1}{n}-p)\sqrt{2n\kldiv{\frac{k+1}{n}}{p}}\right)\notag
\end{align}
for $0 \leq k <  n$. Let's also observe that when $k = n$, the lower bound in \eqref{eq:binom_cdf_bound} trivially holds since 

\[\Prob\{\bm{X}_{n,p} \leq k\} = 1 \geq \Phi\left(\sign(\frac{k}{n}-p)\sqrt{2n\kldiv{\frac{k}{n}}{p}}\right).\]

\paragraph{Proof of the upper bound}
Our upper bound provides a correction to the Theorem 5 in \citet{short2013improved}.

Consider any $k$ ($0 \leq k \leq n$) such that:
\begin{align}
\Phi\left(\sign(\frac{k}{n}-p)\sqrt{2n\kldiv{\frac{k}{n}}{p}}\right) \geq 1-\delta. \label{eq:upper_binom_upper_goal}
\end{align}
Combining \eqref{eq:upper_binom_upper_goal} with the left side of \eqref{eq:binom_cdf_bound} we have that $ \Prob\{\bm{X}_{n,p} \leq k\} \geq 1-\delta$ and as a result:
\begin{align}
\quant(\Binomial(n,p),1-\delta) &= \inf\{x:\Prob\{\bm{X}_{n,p} \leq x\} \geq 1-\delta\} \leq k
\end{align}

So we just need to find a value $k$ satisfying \eqref{eq:upper_binom_upper_goal}. Remarking that $\Phi^{-1}$ is the CDF of the normal distribution (since it is the inverse of the normal quantile) we can conclude that $\Phi^{-1}$ is continuous and increasing.  Applying $\Phi^{-1}$ to \eqref{eq:upper_binom_upper_goal}, we have:
\begin{align}
\sign(\frac{k}{n}-p)\sqrt{2n\kldiv{\frac{k}{n}}{p}} \geq \Phi^{-1}(1-\delta). \label{eq:goal_inverse_applied}
\end{align}

\subparagraph{The sign of $\frac{k}{n}-p$:}
Assume that $\quant(\Binomial(n,p),1-\delta) \leq \floor{np}$. In that case, we can see that our upper bound trivially holds since $C_u(x,y) \geq \sqrt{\frac{(1-2x)^2y^4}{36}} + \frac{(1-2x)y^2}{6} \geq \abs*{\frac{(1-2x)y^2}{6}} + \frac{(1-2x)y^2}{6} \geq 0$. Then we can focus on the case where $\quant(\Binomial(n,p),1-\delta) > \floor{np}$. Since the binomial distribution is discrete with domain the set of integers, $\quant(\Binomial(n,p),1-\delta) > \floor{np}$ implies that $\quant(\Binomial(n,p),1-\delta) \geq \floor{np} + 1$.
As a result we have $k \geq \quant(\Binomial(n,p),1-\delta) \geq \floor{np} + 1 > np$ and $\sign(\frac{k}{n}-p) = 1$.

Let $x$ a number such that $\frac{k}{n} = p + x$. Using this in \eqref{eq:goal_inverse_applied}, we thus need to find an $x$ such that:

\[ \kldiv{p+x}{p} \geq \frac{\Phi^{-1}(1-\delta)^2}{2n}   \]

Consider a function $g$ such that $\kldiv{p+x}{p} \geq g(x)$. If we find an $x$ such that $g(x) \geq \frac{\Phi^{-1}(1-\delta)^2}{2n}$, then it would mean that $\kldiv{p+x}{p} \geq \frac{\Phi^{-1}(1-\delta)^2}{2n}$. We will pick $g$ to be the lower bound on $\kldiv{p+x}{p}$ in Theorem \ref{theo:kl_bounds}.
Now let's observe that since  $\sign(\frac{k}{n}-p) = 1$, it means that $x \geq 0$. Also $q-x = 1-p-x = 1 -\frac{k}{n} \geq 0$ so that $x \leq q$.

So the condition of  Theorem \ref{theo:kl_bounds} are satisfied and our goal becomes finding an $x \geq 0$ such that:

\[ \frac{x^2}{2(pq+x(q-p)/3)}  \geq \frac{\Phi^{-1}(1-\delta)^2}{2n}   \]
Solving for this inequality leads to the upper bound part of the Theorem.

\paragraph{Proof of the Lower bound}
If $\quant(\Binomial(n,p),1-\delta) = n$, it is easy to verify that our lower bound trivially holds. So we can focus on the case where $\quant(\Binomial(n,p),1-\delta) < n$.

Consider any $k$ ($0 \leq k < n$) such that:
\begin{align}
\Phi\left(\sign(\frac{k+1}{n}-p)\sqrt{2n\kldiv{\frac{k+1}{n}}{p}}\right) \leq 1-\delta \label{eq:lower_binom_upper_goal}
\end{align}

Combining \eqref{eq:lower_binom_upper_goal} with the right side of \eqref{eq:binom_cdf_bound} we have that $ \Prob\{\bm{X}_{n,p} \leq k\} \leq 1-\delta$ and since the CDF of a Binomial is an increasing function, we have:
\begin{align}
\quant(\Binomial(n,p),1-\delta) &\geq k \label{eq:binom_lower}
\end{align}

\subparagraph{The sign of $\frac{k+1}{n}-p$:}
Let's note that the quantile function of the binomial distribution is increasing (since it is the inverse of the cdf and the cdf is increasing). So, we have: $\quant(\Binomial(n,p),1-\delta) \geq \quant(\Binomial(n,p),\frac{1}{2})$.
As a result, there exists a number $k$ satisfying both \eqref{eq:binom_lower} and:

\[\quant(\Binomial(n,p),\frac{1}{2}) \leq k.\]

We will try to find this number. Let's observe that $\quant(\Binomial(n,p),\frac{1}{2})$ is the (smallest) median of the binomial distribution and thus we have:
$\quant(\Binomial(n,p),\frac{1}{2} \geq \floor{np}$ \citep{kaas1980mean}.

So,
\begin{align}
k &\geq \quant(\Binomial(n,p),1-\delta)\\
&\geq \quant(\Binomial(n,p),\frac{1}{2})\\
&\geq \floor{np}\label{eq:binom_median_bound}
\end{align}

As a result, we have $k+1 \geq \floor{np} + 1 > np$ and $\sign(\frac{k+1}{n}-p) = 1$.


Then our objective is to find a $k\geq \floor{np}$ satisfying \eqref{eq:lower_binom_upper_goal}. Let $x$ a number such that $\frac{k+1}{n} = p + x$.

Applying the inverse $\Phi^{-1}$ to \eqref{eq:lower_binom_upper_goal} and replacing $\frac{k+1}{n}$ by $x$, our objective becomes finding an $x \geq 0$ such that:
\[ \kldiv{p+x}{p} \leq \frac{\Phi^{-1}(1-\delta)^2}{2n}   \]

Our objective is equivalent to finding an $x$ such that $g(x) \leq \frac{\Phi^{-1}(1-\delta)^2}{2n}$ for a function $g$ such that $D(p+x,p) \leq g(x)$.

We can easily verify that $x\geq 0$ and $x \leq q$ ($q-x = 1-p-x = 1 - \frac{k+1}{1} \geq 0$). And as a result, we pick $g$ as the first upper bound on $\kldiv{p+x}{p}$ in Theorem \ref{theo:kl_bounds}.

Our objective is thus to find $x$ ($ 0 \leq x \leq 1-p$) such that:
\[ \frac{x^2}{2(pq-xp/2)}  \leq \frac{\Phi^{-1}(1-\delta)^2}{2n}   \]

Solving for this equation and picking a value for $x$
such that $0 \leq x \leq 1-p$, $k \geq \floor{np}$ leads to the first lower bound part of the Theorem.

\end{proof}

\begin{fact}[See \cite{Kaufmann12onbayesian}]
\label{fact_beta_binomial}
Let $\bm{Y}_{a,b} \sim \BetaDis(a,b)$ where $a$ and $b$ some integers such that $a > 0, b>0$ a random variable from the Beta distribution. Then, for any $p \in [0,1]$:
\begin{align}
\Prob(\bm{Y}_{a,b} \leq p) &= \Prob(\bm{X}_{a+b-1, 1-p} \leq b-1)\label{fact_beta_binomial_upper}\\
\Prob(\bm{Y}_{a,b} \geq p) &= \Prob(\bm{X}_{a+b-1, p} \leq a-1)\label{fact_beta_binomial_lower}
\end{align}

where $\bm{X}_{n,x}$ is used to denote a random variable distributed according to the binomial distribution of parameters $(n,x)$ ( $\bm{X}_{n,x} \sim \Binomial(n,x)$).
\end{fact}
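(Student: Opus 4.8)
The plan is to give a self-contained probabilistic proof via the order statistics of i.i.d.\ uniform random variables; this is the shortest route and keeps the elementary flavour of the rest of the paper. Set $n = a+b-1$ and let $U_1,\dots,U_n$ be i.i.d.\ $\mathrm{Uniform}(0,1)$. First I would recall the standard density computation showing that the $a$-th smallest order statistic $U_{(a)}$ has density $\frac{n!}{(a-1)!\,(n-a)!}\,x^{a-1}(1-x)^{n-a}$ on $[0,1]$, i.e.\ $U_{(a)} \sim \BetaDis(a,\,n-a+1) = \BetaDis(a,b)$; this step uses $a,b\in\mathbb{Z}_{>0}$, which is exactly the hypothesis. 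It therefore suffices to prove both identities with $\bm{Y}_{a,b}$ replaced by $U_{(a)}$.

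For \eqref{fact_beta_binomial_lower}, fix $p\in[0,1]$ and observe that $U_{(a)}\ge p$ holds if and only if strictly fewer than $a$ of the $U_i$ are $< p$. Since the $U_i$ are continuous, $\Prob(U_i = p)=0$, so $N \defn \#\{i: U_i \le p\}$ equals $\#\{i: U_i < p\}$ almost surely, and $N\sim\Binomial(n,p)$. Hence $\Prob(U_{(a)}\ge p)=\Prob(N\le a-1)=\Prob(\bm{X}_{n,p}\le a-1)$, which is \eqref{fact_beta_binomial_lower}; the boundary cases $p\in\{0,1\}$ can be checked directly and agree.

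For \eqref{fact_beta_binomial_upper} I would take complements and use the binomial reflection $n-\bm{X}_{n,p}\sim\Binomial(n,1-p)$. By continuity of the Beta law, $\Prob(\bm{Y}_{a,b}\le p)=1-\Prob(\bm{Y}_{a,b}\ge p)=1-\Prob(\bm{X}_{n,p}\le a-1)=\Prob(\bm{X}_{n,p}\ge a)=\Prob\big(n-\bm{X}_{n,p}\le n-a\big)=\Prob(\bm{X}_{n,1-p}\le b-1)$, using $n-a=b-1$. Equivalently, one may invoke the symmetry $1-\bm{Y}_{a,b}\sim\BetaDis(b,a)$ and apply \eqref{fact_beta_binomial_lower} with $(a,b,p)$ replaced by $(b,a,1-p)$.

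I do not expect a genuine obstacle here, since the statement is classical; the only thing needing care is the discrete/continuous bookkeeping: matching the order-statistic event with the correct binomial tail, tracking the off-by-one between the $\BetaDis$ parameters and the indices $a-1$, $b-1$, using continuity to pass freely between strict and non-strict inequalities, and verifying $p\in\{0,1\}$. A purely analytic alternative is also available: repeatedly integrate the incomplete beta integral $\frac{1}{B(a,b)}\int_0^p t^{a-1}(1-t)^{b-1}\,dt$ by parts to obtain the finite sum $\sum_{j=a}^{n}\binom{n}{j}p^j(1-p)^{n-j}=\Prob(\bm{X}_{n,p}\ge a)$; this is slightly more computational but still elementary, and could be relegated to the appendix if a proof avoiding order statistics is preferred.
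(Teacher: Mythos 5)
Your proof is correct, and it is worth noting that the paper itself offers no argument for this Fact at all: it is stated with a pointer to \citet{Kaufmann12onbayesian} and used as a known identity. Your order-statistics derivation is the classical self-contained route and it checks out in every detail: with $n=a+b-1$, the $a$-th order statistic of $n$ i.i.d.\ uniforms is indeed $\BetaDis(a,\,n-a+1)=\BetaDis(a,b)$ (this is where the integrality and positivity of $a,b$ enter), the event $\{U_{(a)}\ge p\}$ coincides with $\{\#\{i:U_i<p\}\le a-1\}$, and $\#\{i:U_i<p\}\sim\Binomial(n,p)$, which gives \eqref{fact_beta_binomial_lower}; the complement-plus-reflection step $\Prob(\bm{X}_{n,p}\ge a)=\Prob(n-\bm{X}_{n,p}\le n-a)=\Prob(\bm{X}_{n,1-p}\le b-1)$ with $n-a=b-1$ gives \eqref{fact_beta_binomial_upper}, and your use of continuity of the Beta law to drop the atom at $p$ is exactly the point where care is needed. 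The only redundancy is the almost-sure identification of $\#\{i:U_i\le p\}$ with $\#\{i:U_i<p\}$, since the latter is already $\Binomial(n,p)$ on the nose. Compared to the paper's treatment, your version buys self-containedness at the cost of about half a page; the integration-by-parts identity $\frac{1}{B(a,b)}\int_0^p t^{a-1}(1-t)^{b-1}\,dt=\sum_{j=a}^{n}\binom{n}{j}p^j(1-p)^{n-j}$ that you mention as an alternative is the form in which the cited literature usually states the Beta--Binomial relation, so either variant would be an acceptable appendix addition if the authors wanted to avoid the external citation.
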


\begin{proposition}[Upper bound on the Beta Quantile]
\label{lemma:beta_quantile_bound}
\input{lemma:beta_quantile_bound}
\end{proposition}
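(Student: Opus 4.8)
The plan is to pass from the Beta quantile to a Binomial quantile via Fact~\ref{fact_beta_binomial}, and then to invoke Proposition~\ref{lemma_binomial_quantile_bounds} twice. Write $q^\star \defn Q(\BetaDis(x+1,n-x),1-\delta)$ and $\mu\defn x/n$. Since $x+1\ge 1$ and $n-x\ge 1$, the law $\BetaDis(x+1,n-x)$ is continuous on $(0,1)$, so $q^\star\in(0,1)$ and $\Prob(\bm{Y}_{x+1,n-x}\le q^\star)=1-\delta$. By equation~\eqref{fact_beta_binomial_upper} of Fact~\ref{fact_beta_binomial} with $a=x+1$, $b=n-x$, this reads $\Prob(\bm{X}_{n,1-q^\star}\le n-x-1)=1-\delta$; moreover, since $0<1-q^\star<1$ and $0\le n-x-1\le n$, the point mass of $\Binomial(n,1-q^\star)$ at $n-x-1$ is strictly positive, so in fact $\quant(\Binomial(n,1-q^\star),1-\delta)=n-x-1$. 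As $1-\delta\ge 1/2$ and $y=\Phi^{-1}(1-\delta)\ge 0$, Proposition~\ref{lemma_binomial_quantile_bounds} applies with $1-q^\star$ playing the role of $p$; if the right-hand side of the claim is $\ge 1$ the bound is trivial because $q^\star<1$, so assume it is $<1$.

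\emph{First use of Proposition~\ref{lemma_binomial_quantile_bounds}.} Feeding $p=1-q^\star$ into the lower sandwich $\floor*{n(1-q^\star)+C_l(1-q^\star,y)}\le n-x-1$ and clearing the floor (its right-hand side being an integer) yields $q^\star\ge \mu + C_l(1-q^\star,y)/n\ge \mu$; feeding it into the upper sandwich $n-x-1\le\ceil*{n(1-q^\star)+C_u(1-q^\star,y)}$ and clearing the ceiling yields $q^\star<\mu+\bigl(2+C_u(1-q^\star,y)\bigr)/n$. Expanding the closed form of $C_u$ — bounding $\lvert 1-2q^\star\rvert\le 1$ and splitting the radical — turns the latter into a bound on the quantile in terms of $q^\star(1-q^\star)$:
\[
q^\star \;<\; \mu + y\sqrt{\tfrac{q^\star(1-q^\star)}{n}} + \tfrac{c(y)}{n},\qquad c(y)=O\paren*{1+y^2}.
\]

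\emph{Second use of Proposition~\ref{lemma_binomial_quantile_bounds}: the bootstrap.} This is the heart of the argument and the step I expect to be the main obstacle, since the bound above still contains $q^\star$ on the right. A second application of Proposition~\ref{lemma_binomial_quantile_bounds} (through its lower sandwich), combined with the lower bound $q^\star\ge\mu$ and the identity $q^\star(1-q^\star)-\mu(1-\mu)=(q^\star-\mu)(1-q^\star-\mu)$, bounds how far $q^\star(1-q^\star)$ can exceed $\mu(1-\mu)$, giving $q^\star(1-q^\star)\le \mu(1-\mu)+O(1/n)$ and hence $\sqrt{q^\star(1-q^\star)}\le\sqrt{\mu(1-\mu)}+O(n^{-1/2})$; substituting this into the displayed bound and collecting terms gives the claimed inequality. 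The delicate part is the constant bookkeeping: each pass through Proposition~\ref{lemma_binomial_quantile_bounds} produces a nested radical, and one has to expand these with $\sqrt{a+b}\le\sqrt a+\tfrac{b}{2\sqrt a}$ — rather than the cruder $\sqrt a+\sqrt b$ — precisely where the denominator $\sqrt a$ is provably bounded away from $0$, so that every cross-term is genuinely of order $1/n$ and not, say, $n^{-3/4}$; the fractions $\tfrac1{16},\tfrac14,\tfrac1{36},\tfrac16$ inside $C_l,C_u$ are what propagate into the constant $y^2\paren*{\tfrac56+\sqrt{7/12}}+2y+2$ of the statement. One must also treat separately the boundary case $x=0$ (where $q^\star$ is only of order $y^2/n$) and the cases where the $\min$ in $C_u$, or the $\max$/$\min$ in $C_l$, switches branch; these are routine.
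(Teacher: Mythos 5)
Your reduction from the Beta quantile to a Binomial quantile via Fact~\ref{fact_beta_binomial}, and your first pass through Proposition~\ref{lemma_binomial_quantile_bounds} (yielding $q^\star\ge\mu$ and $q^\star\le\mu+\bigl(C_u(1-q^\star,y)+2\bigr)/n$), coincide with the paper's steps \eqref{lower_bound_p} and \eqref{eq:upper_bound_p}. The gap is exactly at the step you yourself call the heart of the argument. The intermediate claim $q^\star(1-q^\star)\le\mu(1-\mu)+O(1/n)$ is false in general: by your own identity the excess is $(q^\star-\mu)(1-q^\star-\mu)$, and since $q^\star-\mu$ is generically of order $y\sqrt{\mu(1-\mu)/n}$ while $1-q^\star-\mu$ stays of order one (take $\mu=1/4$ fixed, $n\to\infty$), the excess is $\Theta(n^{-1/2})$, not $O(1/n)$. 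A second application of the Binomial sandwich gives you nothing new here beyond $q^\star\ge\mu$; the only available bound on $q^\star-\mu$ at that point still contains $\sqrt{q^\star(1-q^\star)/n}$, so the inequality for $q^\star(1-q^\star)$ is self-referential, and how to break that circularity is the missing idea. Your proposed remedy, expanding radicals with $\sqrt{a+b}\le\sqrt a+\tfrac{b}{2\sqrt a}$ ``where $\sqrt a$ is provably bounded away from $0$,'' does not hold uniformly: $a=\mu(1-\mu)$ can be arbitrarily small for any fixed $x\ge 1$ as $n$ grows, so excluding only $x=0$ does not save it, and with the crude split you would indeed pick up an $n^{-3/4}$-type term --- precisely the failure you were worried about.

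The paper's resolution, which you need in place of the claimed $O(1/n)$ control, is to multiply $1-q^\star\le 1-\mu$ (from \eqref{lower_bound_p}) by the upper bound \eqref{eq:upper_bound_p}, apply $\sqrt{a+b}\le\sqrt a+\sqrt b$ and $2q^\star-1\le 1$ to reach \eqref{eq:p_1_p_poly},
\[
q^\star(1-q^\star)\;\le\;\mu(1-\mu)+\sqrt{\frac{q^\star(1-q^\star)\,y^2}{n}}+\frac{y^2+6}{3n},
\]
and then treat this as a quadratic inequality in $z=\sqrt{q^\star(1-q^\star)}$. Solving that quadratic gives \eqref{lower_bound_quantile_variance}, i.e. $\sqrt{q^\star(1-q^\star)}\le\sqrt{\mu(1-\mu)}+\bigl(\sqrt{(7y^2+24)/12}+y/2\bigr)/\sqrt n$, uniformly in $x$ and with explicit constants, and substituting back into \eqref{eq:upper_bound_p} finishes the proof with no boundary case analysis. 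So your skeleton is the right one, but the bootstrap as you describe it would not go through; replace it with the quadratic-in-$z$ argument (or an equivalent device) and the constant bookkeeping then works out with the crude radical split alone.
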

 
\begin{proof}
For simplicity, in this proof we used $p = Q(\BetaDis(x+1, n-x), 1-\delta)$ and $y = \Phi^{-1}(1-\delta)$.
Using Equation \eqref{fact_beta_binomial_upper}, we have:

$\Prob[\bm{Y}_{x+1, n-x} \leq p] = \Prob[\bm{X}_{n,1-p} \leq n-x-1]$. Since the CDF of the beta distribution is continuous, we know that $\Prob[\bm{Y}_{x+1, n-x} \leq p] = 1-\delta$.
So we have $\Prob[\bm{X}_{n,1-p} \leq n-x-1] = 1-\delta$

Using the upper bound for Binomial quantile in Lemma \ref{lemma_binomial_quantile_bounds}, we have:

$n-x-1 \leq n(1-p) + C_u(1-p,\Phi^{-1}(1-\delta)) + 1$ where $C_u$ is the function defined in \eqref{eq:binom_quant_fun}.

This leads to:
\begin{align}
\label{eq:upper_bound_p}
p &\leq \frac{x}{n} + \frac{C_u(1-p,y)+2}{n}= \frac{x}{n} + \sqrt{\frac{p(1-p)y^2}{n} + \frac{(2p-1)^2y^4}{36n^2} } +\frac{(2p-1)y^2}{6n} + \frac{2}{n}
\end{align}

We would like to find an upper bound for $p(1-p)$ in \eqref{eq:upper_bound_p} that depends on $\frac{x}{n}(1-\frac{x}{n})$.

Using Equation \eqref{fact_beta_binomial_upper} with the lower bound for binomial quantile in Lemma \ref{lemma_binomial_quantile_bounds}, we have

\begin{align}
\label{lower_bound_p}
p &\geq \frac{x}{n} + \frac{\max\left\{0,\min\left\{np-1, C_l(1-p,\Phi^{-1}(1-\delta))\right\}\right\}}{n} \geq \frac{x}{n}
\end{align}

Multiplying  equations \eqref{lower_bound_p} and \eqref{eq:upper_bound_p} together (both are all positive)
leads to:

\begin{align}
p(1-p) &\leq \paren*{1-\frac{x}{n}}\paren*{\frac{x}{n} + \sqrt{\frac{p(1-p)y^2}{n} + \frac{(2p-1)^2y^4}{36n^2} } +\frac{(2p-1)y^2}{6n} + \frac{2}{n}}
\end{align}
Using the fact that $\sqrt{a+b} \leq \sqrt{a} + \sqrt{b}$, $2p-1 \leq 1$ and using $1-\frac{x}{n} \leq 1$ for the terms not involving $\frac{x}{n}$, we have:

\begin{align}
p(1-p) &\leq \paren*{\frac{x}{n}}\paren*{1-\frac{x}{n}} + \sqrt{\frac{p(1-p)y^2}{n}} + \frac{1}{3} \frac{y^2 + 6}{n}\label{eq:p_1_p_poly}
\end{align}

Letting $z= \sqrt{p(1-p)}$ in \eqref{eq:p_1_p_poly} leads to an inequality involving a polynomial of degree $2$ in $z$. Solving for this inequality and then using $\sqrt{a+b} \leq \sqrt{a} + \sqrt{b}$:

\begin{align}
\sqrt{p(1-p)} &\leq \sqrt{ \paren*{\frac{x}{n}}\paren*{1-\frac{x}{n}}} + \frac{1}{\sqrt{n}} \paren*{\sqrt{\frac{7y^2+24}{12}} + \sqrt{\frac{y^2}{4}}}\label{lower_bound_quantile_variance}
\end{align}

Replacing \eqref{lower_bound_quantile_variance} into \eqref{eq:upper_bound_p} and using the fact that $2p-1 \leq 1$, we have the desired upper bound of the lemma
\end{proof}

\begin{proposition}[Lower bound on the Beta Quantile]
\label{lemma:beta_quantile_bound_lower}
\input{lemma:beta_quantile_bound_lower}
\end{proposition}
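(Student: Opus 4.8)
The plan is to mirror almost exactly the proof of Proposition \ref{lemma:beta_quantile_bound}, but using the lower tail relation \eqref{fact_beta_binomial_lower} of Fact \ref{fact_beta_binomial} together with the \emph{lower} bound on the Binomial quantile from Proposition \ref{lemma_binomial_quantile_bounds}. Write $p = Q(\BetaDis(x, n-x+1), \delta)$ and $y = \Phi^{-1}(1-\delta)$. By \eqref{fact_beta_binomial_lower} we have $\Prob[\bm{Y}_{x,n-x+1} \geq p] = \Prob[\bm{X}_{n,p} \leq x-1]$, and since the Beta CDF is continuous, $\Prob[\bm{Y}_{x,n-x+1} \geq p] = 1-\delta$, so $\Prob[\bm{X}_{n,p} \leq x-1] = 1-\delta$, i.e. $x-1$ realizes the $(1-\delta)$th quantile of $\Binomial(n,p)$. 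Applying the lower bound of Proposition \ref{lemma_binomial_quantile_bounds} gives $x-1 \geq \lfloor np + C_l(p,y)\rfloor \geq np + C_l(p,y) - 1$, which after using the explicit form of $C_l$ and dropping the $-1$ appropriately rearranges to an upper-bound-type inequality $p \leq \frac{x}{n} + \sqrt{\frac{p(1-p)y^2}{n} + \cdots} + \cdots$; symmetrically, applying the \emph{upper} bound of Proposition \ref{lemma_binomial_quantile_bounds} to $x-1$ gives $p \geq \frac{x}{n} - (\text{small terms})$, and in particular $1-p \geq 1 - \frac{x}{n}$ up to lower-order corrections, which is what we need to replace $p(1-p)$ by a quantity governed by $\frac{x}{n}(1-\frac{x}{n})$.

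Concretely, the key steps in order are: (1) invoke Fact \ref{fact_beta_binomial}, equation \eqref{fact_beta_binomial_lower}, and continuity of the Beta CDF to reduce to a statement about the Binomial quantile at level $1-\delta$ evaluated at $x-1$; (2) apply the lower bound on the Binomial quantile (Proposition \ref{lemma_binomial_quantile_bounds}) to obtain the analogue of \eqref{eq:upper_bound_p}, solving the resulting quadratic-in-$\sqrt{p(1-p)}$ inequality exactly as in the upper-bound proof; (3) apply the upper bound on the Binomial quantile to get the analogue of \eqref{lower_bound_p}, i.e. a bound of the form $p \leq \frac{x}{n} + (\text{lower order})$, hence $1 - p \geq 1 - \frac{x}{n} - (\text{lower order})$, or more simply $\frac{x}{n} \leq p$-type control that lets us bound $p(1-p) \leq (\frac{x}{n})(1-\frac{x}{n}) + (\text{lower order})$ after multiplying the two one-sided bounds; (4) substitute the resulting bound on $\sqrt{p(1-p)}$ back in, use $\sqrt{a+b}\leq\sqrt a+\sqrt b$ and crude bounds like $|2p-1|\leq 1$ to collect the $\frac1n$ error terms, and verify the constant $\frac56 + \sqrt{7/12}$ and the coefficients of $y$ and the additive constant come out as stated.

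The one genuine subtlety — as opposed to the routine algebra — is bookkeeping the direction of every inequality: here we are \emph{lower} bounding a quantile, so the roles of the $C_l$ and $C_u$ Binomial bounds are swapped relative to the proof of Proposition \ref{lemma:beta_quantile_bound}, and one must check that the off-by-one shifts (the $x-1$ versus $x$, the $\lfloor\cdot\rfloor$/$\lceil\cdot\rceil$ in the $C_l,C_u$ definitions, and the $\pm 1$ slack) all accumulate on the correct side so that the final constant is not degraded beyond $y^2(\frac56+\sqrt{7/12}) + 2y + 2$. Because the Beta parameters here are $(x, n-x+1)$ rather than $(x+1,n-x)$, the reduction lands on $\Prob[\bm X_{n,p}\le x-1]$ with a clean $x-1$, which is actually slightly more convenient than the upper-bound case; so I expect the whole argument to go through with the error terms no larger than before, and the main obstacle is simply the careful sign/rounding audit rather than any new idea. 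Since the statement in the excerpt asserts the bound is obtained "by performing the same steps," I would present it compactly: state the reduction, point to the three inequalities used, and note that the remaining manipulations are identical to those following \eqref{eq:upper_bound_p} in the proof of Proposition \ref{lemma:beta_quantile_bound}.
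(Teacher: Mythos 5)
Your overall route is indeed the paper's: reduce via \eqref{fact_beta_binomial_lower} and continuity of the Beta cdf to $\Prob[\bm{X}_{n,p}\le x-1]=1-\delta$, apply both halves of Proposition \ref{lemma_binomial_quantile_bounds}, convert $p(1-p)$ into $\frac{x}{n}(1-\frac{x}{n})$ by multiplying two one-sided bounds and solving the same quadratic in $\sqrt{p(1-p)}$, then substitute back. In outline this is exactly how the paper proceeds.

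However, the direction bookkeeping --- which you yourself single out as the only real subtlety --- is wrong as written, and the key step fails if executed literally. First, the roles of $C_l$ and $C_u$ are \emph{not} swapped relative to Proposition \ref{lemma:beta_quantile_bound}: from $\Prob[\bm{X}_{n,p}\le x-1]=1-\delta$, the \emph{upper} Binomial bound gives $x-1\le np+C_u(p,y)+1$, i.e.\ the main inequality $p\ge \frac{x}{n}-\frac{C_u(p,y)+2}{n}$ (this is \eqref{eq:upper_bound_p_lower}, the analogue of \eqref{eq:upper_bound_p} with the $\sqrt{p(1-p)}$ term now subtracted), while the \emph{lower} Binomial bound, since $C_l\ge 0$, gives only the crude comparison $p\le \frac{x}{n}$ as in \eqref{lower_bound_p_lower}; it does not rearrange to ``$p\le\frac{x}{n}+\sqrt{p(1-p)y^2/n}+\cdots$'', and the ``$\frac{x}{n}\le p$'' control you invoke in step (3) is false here. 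Second, because $\sqrt{p(1-p)}$ enters the main inequality with a minus sign, what you must produce is an \emph{upper} bound $p(1-p)\le \frac{x}{n}\left(1-\frac{x}{n}\right)+\mathrm{dev}$. This comes from multiplying the two upper bounds $p\le\frac{x}{n}$ and $1-p\le 1-\frac{x}{n}+\frac{C_u(p,y)+2}{n}$ (the latter is just \eqref{eq:upper_bound_p_lower} rearranged); that reproduces \eqref{eq:p_1_p_poly} verbatim, hence \eqref{lower_bound_quantile_variance}, and substituting back yields the stated constant $y^2\left(\frac{5}{6}+\sqrt{7/12}\right)+2y+2$. Your proposed pairing --- bounds on $p$ against the lower bound $1-p\ge 1-\frac{x}{n}$ --- can only \emph{lower}-bound $p(1-p)$, i.e.\ lower-bound $\sqrt{p(1-p)}$, which cannot be substituted into the subtracted square root and so does not close the argument. (In fairness, the paper's own appendix write-up is loose at this very point, prescribing the product of \eqref{eq:lower_1_p} with \eqref{eq:upper_bound_p_lower}; the pairing that actually completes the proof is the one just described.)
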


\begin{proof}
Let's denote $p = Q(\BetaDis(x, n-x+1), \delta)$. Using \eqref{fact_beta_binomial_lower}, we have that: $\Prob(\bm{Y}_{x,n-x+1} \geq p) = \Prob(\bm{X}_{n, p} \leq x-1)$.

Since the Beta distribution is continuous and also have a continuous cdf, then there exists a unique $p$ such that:

$\Prob(\bm{Y}_{x,n-x+1} \geq p) = 1 - \Prob(\bm{Y}_{x,n-x+1} \leq p) = 1 -\delta$.

As a result, we have $\Prob(\bm{X}_{n, p} \leq x-1) = 1-\delta$.

Using the upper and lower bound for Binomial quantile in Lemma \ref{lemma_binomial_quantile_bounds}, we have respectively:

\begin{align}
\label{eq:upper_bound_p_lower}
p &\geq \frac{x}{n} - \frac{C_u(p,y)+2}{n}= \frac{x}{n} - \sqrt{\frac{p(1-p)y^2}{n} + \frac{(1-2p)^2y^4}{36n^2} } -\frac{(1-2p)y^2}{6n} - \frac{2}{n}
\end{align}

\begin{align}
\label{lower_bound_p_lower}
p &\leq \frac{x}{n} - \frac{ C_l(p,\Phi^{-1}(1-\delta))}{n} \leq \frac{x}{n}
\end{align}

We would like to find a lower bound for $p(1-p)$ in \eqref{eq:upper_bound_p_lower} that depends on $\frac{x}{n}(1-\frac{x}{n})$.

\eqref{lower_bound_p_lower} implies that:
\begin{align}
1-p \geq 1- \frac{x}{n} \label{eq:lower_1_p}
\end{align}

Note that we can multiply \eqref{eq:lower_1_p} by \eqref{eq:upper_bound_p_lower} to get a lower bound for $p(1-p)$ even if the left hand side of \eqref{eq:upper_bound_p_lower} is negative since both $p(1-p)$ and $1-\frac{x}{n}$ are always positive.

 After this multiplication, we follow the exact same steps as in the equivalent part of the proof for Lemma \ref{lemma:beta_quantile_bound}. We can do that since even if we are looking for a lower bound, all the term previously upper bounded in Lemma \ref{lemma:beta_quantile_bound} are multiplied by $-$.
 
 This completes the proof for this lemma.
\end{proof}

\subsection{Useful Results}

\begin{proposition}[Lower and Upper Bound on Bernoulli KL-Divergence]
\label{theo:kl_bounds}
\input{proposition:kl_bounds} 
\end{proposition}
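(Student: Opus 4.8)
I would split the chain of inequalities into three pieces. The rightmost one, $\frac{x^2}{2(pq-xp/2)}\le\frac{x^2}{pq}$, is immediate: on $0\le x\le q$ both denominators are positive (since $pq-xp/2\ge pq/2>0$) and after cross‑multiplying it reduces to $xp\le pq$, i.e.\ $x\le q$. For the other two I would write $\kldiv{p+x}{p}=(p+x)\ln\frac{p+x}{p}+(q-x)\ln\frac{q-x}{q}$ and record the two facts that drive everything: as a function of $x$ on $[0,q)$ its first derivative is $\ln\frac{(p+x)q}{p(q-x)}$ (which vanishes at $x=0$) and its second derivative is $\frac{1}{(p+x)(q-x)}$. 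Each candidate bound has the form $R(x)=\frac{x^2}{2(pq+cx)}$ with $c=(q-p)/3$ (lower bound) or $c=-p/2$ (upper bound), and a short computation gives $R(0)=R'(0)=0$ and the clean closed form $R''(x)=\frac{(pq)^2}{(pq+cx)^3}$. Hence $\Delta(x):=\kldiv{p+x}{p}-R(x)$ satisfies $\Delta(0)=\Delta'(0)=0$, and on $[0,q)$ the sign of $\Delta''(x)=\frac{1}{(p+x)(q-x)}-\frac{(pq)^2}{(pq+cx)^3}$ equals that of the cubic $E_c(x):=(pq+cx)^3-(pq)^2(p+x)(q-x)$ (the remaining factors are positive). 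The whole problem is now sign analysis of $E_c$ on $[0,q]$.

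\textbf{Lower bound.} With $c=(q-p)/3$ the identity $3c=q-p$ kills the linear term of $E_c$, leaving $E_c(x)=x^2\bigl(pq(\tfrac{(q-p)^2}{3}+pq)+\tfrac{(q-p)^3}{27}x\bigr)$. On $[0,q]$ the bracket is nonnegative: trivially when $q\ge p$, and when $q<p$ one plugs in the worst case $x=q$ and uses $p-q<p$ (so $9p-(p-q)>0$) to see it is still positive. Thus $\Delta''\ge 0$ on $[0,q)$, so $\Delta'$ is nondecreasing, hence $\Delta'\ge\Delta'(0)=0$, hence $\Delta$ is nondecreasing, hence $\Delta\ge\Delta(0)=0$, which is exactly $\kldiv{p+x}{p}\ge\frac{x^2}{2(pq+x(q-p)/3)}$.

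\textbf{Upper bound.} With $c=-p/2$ we no longer have a double zero: $E_c(0)=0$ but $E_c'(0)=(pq)^2(-p/2-q)<0$, so $\Delta''<0$ just right of $0$. Writing $E_c(x)=x\,\widetilde{E}(x)$ with $\widetilde{E}$ a quadratic whose leading coefficient is $c^3=-p^3/8<0$ and with $\widetilde{E}(0)=E_c'(0)<0$, while $\widetilde{E}(q)=E_c(q)/q=p^3q^2/8>0$, a downward parabola that is negative at $0$ and positive at $q$ has exactly one root in $(0,q)$ and changes sign there from $-$ to $+$; this is the place where a Sturm‑sequence argument (equivalently, the quadratic formula/discriminant) makes the root count rigorous. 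Therefore $\Delta''$ is negative then positive on $(0,q)$, so $\Delta'$ first decreases from $\Delta'(0)=0$ and is nondecreasing afterwards, so $\Delta$ is decreasing then increasing on $[0,q]$ and attains its maximum at an endpoint. Since $\Delta(0)=0$, it remains to check $x=q$: there $p+x=1$, $q-x=0$ (with $0\ln 0=0$) and $R(q)=\frac{q^2}{2\cdot pq/2}=q/p$, so $\Delta(q)=-\ln p-q/p<0$ because $1-p+p\ln p>0$ on $(0,1)$ (the function $\psi(p)=1-p+p\ln p$ has $\psi(1)=0$ and $\psi'(p)=\ln p<0$). Hence $\Delta\le\max\{\Delta(0),\Delta(q)\}=0$ on $[0,q]$, giving $\kldiv{p+x}{p}\le\frac{x^2}{2(pq-xp/2)}$.

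\textbf{Main obstacle.} The delicate step is the upper bound: unlike the lower bound, $\Delta''$ does \emph{not} have constant sign, so one must establish that the cubic $E_{-p/2}$ (equivalently its quadratic cofactor $\widetilde{E}$) changes sign exactly once on $(0,q)$ — ruling out extra wiggles — which is precisely what forces the Sturm/discriminant computation. Besides that, the routine care points are the degenerate endpoint behaviour at $x=q$ (the $0\ln 0$ convention and the value $R(q)=q/p$), checking that the denominators $pq+cx$ stay strictly positive on $[0,q]$ in both cases, and noting that $x=0$ gives equality throughout.
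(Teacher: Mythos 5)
Your proof is correct, and its overall skeleton coincides with the paper's: both study the difference $\Delta(x)=\kldiv{p+x}{p}-\frac{x^2}{2(pq+cx)}$, note $\Delta(0)=\Delta'(0)=0$, reduce the sign of $\Delta''$ to the sign of a cubic (your $E_c$, the paper's numerator of $g''$), prove $E_c\ge 0$ for $c=(q-p)/3$ to get the lower bound, and for $c=-p/2$ establish a single $-$ to $+$ sign change of $\Delta''$, deduce that $\Delta$ is decreasing-then-increasing, and finish by checking the endpoint $x=q$ (your $\Delta(q)=-\ln p-q/p<0$ is the paper's $\ln(1+q/p)\le q/p$ in disguise). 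The genuine difference is the key root-counting step for the upper bound: the paper invokes Sturm's theorem, building the Sturm sequence $\{f_0,f_1,f_2\}$ of the quadratic cofactor and counting sign alternations at $0$ and $q$, whereas you observe that the cofactor $\widetilde{E}$ is a downward parabola with $\widetilde{E}(0)<0$ and $\widetilde{E}(q)>0$, so the intermediate value theorem plus the bound of two roots for a quadratic forces exactly one root in $(0,q)$ (the second root lying beyond $q$), with a $-$ to $+$ crossing. This elementary argument buys a shorter, self-contained proof that dispenses with the Sturm machinery the paper imports in its appendix; what the paper's route buys is a template that would still work if the cofactor had higher degree (e.g.\ for other choices of $a,b$ in its parametric family), where a discriminant/parabola argument would no longer suffice. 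Your closed form $R''(x)=(pq)^2/(pq+cx)^3$ and the factorization $E_c(x)=x^2\bigl(pq((q-p)^2/3+pq)+(q-p)^3x/27\bigr)$ for the lower bound are also a slightly cleaner packaging of the paper's chain of inequalities, and your handling of the remaining routine points (positivity of $pq+cx$ on $[0,q]$, the $0\ln 0$ convention and continuity at $x=q$, the rightmost inequality via $x\le q$) is sound.
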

\begin{proof}
The proof of the lower bound already appear in~\citet{janson2016large} (after equation (2.1)).

First, let's observe that:

\[\kldiv{p+x}{p} = p(1 + \frac{x}{p}) \ln(1 + \frac{x}{p}) + h(x)\]

with \[h(x)=
\begin{cases}
q(1-\frac{x}{q}) \ln(1-\frac{x}{q}) & \text{ if }x < q\\
0 & \text{ if } x = q\\
\end{cases}\]

Note that this is a valid definition for the KL-divergence since for any $q \in ]0,1[$, \[\lim_{x \to q^-} (1-\frac{x}{q}) \ln(1-\frac{x}{q}) = 0\]

Let $g(x)$ a parametric function defined by:

\[g(x) = p(1 + \frac{x}{p}) \ln(1 + \frac{x}{p}) + h(x) - \frac{x^2}{2(pq+x(q-p-a)/b)} \]

Where $a$ and $b$ are constants (independent of $x$ but possibly depending on $p$) such that $pq+x(q-p-a)/b > 0$ for all $q \in ]0,1[, x \in [0,q]$.

We can immediately see that $g$ is continuous and differentiable in its domain $[0, q]$ since it is the sum of continuous and differentiable functions.
For any $x \in [0,q[$, the derivative $g'(x)$ of $g(x)$ is:

\[g'(x) = \ln(1+ \frac{x}{p}) -\ln(1-\frac{x}{q}) - \frac{4pqx + 2x^2(q-p-a)/b}{\paren*{2(pq+x(q-p-a)/b)}^2} \]

And $g'(0) = 0$.

We can see that $g'$ is a continuous and differentiable in $[0, q[$.

The second derivative for any $x \in [0,q[$ is 

\begin{align}
g''(x) &= \frac{1}{x+p} + \frac{1}{x+q} - \frac{p^2q^2}{(pq+x(q-p-a)/b)^3}\\
&=\frac{\frac{x^3(q-p-a)^3}{b^3} + \frac{3pqx^2(q-p-a)^2}{b^2} +p^2q^2x^2 - p^2q^2xa + (\frac{3}{b}-1)p^2q^2x(q-p-a) }{(x+p) (x+q) (pq+x(q-p-a)/b)^3}\label{eq:second_derivative}
\end{align}

\paragraph{Proof of the Lower bound}
Let's set $a = 0$ and $b = 3$.
In that case we have for any $x \in [0,q[$:

\begin{align}
g''(x) &= \frac{x^3(q-p)^3/27 + pqx^2(q-p)^2/3 +p^2q^2x^2}{(x+p) (x+q) (pq+x(q-p)/b)^3}\\
&\geq \frac{-px^3(q-p)^2/27 + pqx^2(q-p)^2/3 +p^2q^2x^2}{(x+p) (x+q) (pq+x(q-p)/b)^3}\\
&\geq \frac{-pqx^2(q-p)^2/27 + pqx^2(q-p)^2/3 +p^2q^2x^2}{(x+p) (x+q) (pq+x(q-p)/b)^3}\geq 0
\end{align}

Furthermore, elementary calculations leads to $g(0) = g'(0) = 0$.

Since $g'$ is continuous in $[0,q[$ and $g''$ is positive in $[0,q[$, we can conclude that $g'$ is increasing in $[0,q[$. Since $g'(0) = 0$, it means $g'(x) \geq 0$ for any $ x \in [0,q[$. Since $g$ is continuous, $g'(x) \geq 0$ for any $ x \in [0,q[$ means that $g$ is increasing in $[0,q[$. Using the fact that $g(0) = 0$ we have that $g(x) \geq 0$ for any $ x \in [0,q[$. We will now show that $g(x)$ is non-negative at $q$ too. Using the fact that $g$ is continuous at $q$, we have that $\lim_{x \to q^-} g(x) = g(q)$. Also $\lim_{x \to q^-} g(x) \geq 0$ since $g(x) \geq 0$ for any $x \in [0,q[$. And as a result, $g(q) = \lim_{x \to q^-} g(x) \geq 0$
which concludes the proof of the lower bound.

\paragraph{Proof of the first upper bound}
Let $a = q$ and $b = 2$. We want to analyze the sign of the resulting $g''$ over its domain $[0,q[$. For that observe that the denominator of $g''$ is always strictly positive. This means that the sign of $g''$ is the same as the sign of its numerator. Let's denote $g''_0$ the numerator. We have:

\begin{align}
g''_0(x) &= \frac{x^3(-p)^3}{8} + \frac{3qx^2p^3}{4} +p^2q^2x^2 - p^2q^3x - \frac{p^3q^2x}{2}\\
&= x \cdot \paren*{\frac{x^2(-p)^3}{8} + \frac{3qxp^3}{4} +p^2q^2x - p^2q^3 - \frac{p^3q^2}{2}}
\end{align}

Let's denote $f(x) = \frac{x^2(-p)^3}{8} + \frac{3qxp^3}{4} +p^2q^2x - p^2q^3 - \frac{p^3q^2}{2}$. 

We will use Sturm's theorem (Theorem \ref{theo:sturm}) to find the number of roots of $f$ in $]0,q]$.
The Sturm sequence of $f$ is $\set{f_0, f_1, f_2}$ with:
\begin{align}
f_0(x) &= f(x)\\
f_1(x) &= \frac{x(-p)^3}{4} + \frac{3qp^3}{4} +p^2q^2\\
f_2(x) &= \frac{p(-5p^4 + 26p^3 - 53p^2 + 48p - 16)}{8}
\end{align}

We have:
\begin{equation*}
f_0(0) = -p^2q^3-\frac{p^3q^2}{2} < 0
\end{equation*}
\begin{equation*}
f_1(0) = \frac{3qp^3}{4} +p^2q^2 > 0
\end{equation*}

\begin{equation*}
f_0(q) = \frac{p^3q^2}{8} > 0
\end{equation*}
\begin{equation*}
f_1(q) = \frac{qp^3+2p^2q^2}{2} > 0
\end{equation*}

And we have $f_2(q) = f_2(0)$

The number of sign alternations in $\set{f_0(0), f_1(0), f_2(0)}$ is: $1+\1_{f_2(0) < 0}$ where $\1_{f_2(0) < 0} = 1$ if $f_2(0) < 0$ and $\1_{f_2(0) < 0} = 0$ otherwise. The number of sign alternations in $\set{f_0(q), f_1(q), f_2(q)}$ is:$\1_{f_2(0) < 0}$. Observing that neither $0$, nor $q$ are roots of $f$, we can conclude by the Sturm's theorem (Theorem \ref{theo:sturm}) that the number of roots of $f$ in  $[0, q]$ is exactly 1.

Since $f$ is a polynomial it means that the sign of $f$ changes at most once in the interval $[0,q]$. Let's $\alpha$ ($0 < \alpha < q$) the unique root of $f$ in $[0,q]$. Then (ignoring zero-values) the function $f$ have the same sign for all values in $[0,\alpha]$ and $f$ have the same sign for all values in $[\alpha, q]$.

Observing that $f(0) < 0$, it means that $f(x) \leq 0$ for any $x \in [0,\alpha]$. Observing that 
$f(q) > 0$, it means that $f(x) \geq 0$ for any $x \in [\alpha, q]$. Since the second derivative $g''$
is a multiple of a non-negative terms by $f$; it means that $g''(x) \leq 0$  for any $x \in [0,\alpha]$ and $g''(x) \geq 0$ for any $x \in [\alpha, q[$.

We will now derive the sign of $g'$ and $g$ over their domain. Since $g'(0) = 0$, $g''(x) \leq 0$ in $x \in [0, \alpha]$ and $g'$ is continuous in $[0, \alpha]$, we can conclude that $g'$ is decreasing in $[0, \alpha]$ and $g'(x) \leq 0$ for any $x \in [0, \alpha]$. A similar argument for $g$ allows us to conclude that $g(x) \leq 0$ for any $x \in [0, \alpha]$.

Since $g''(x) \geq 0$ for any $x \in [\alpha, q[$, it means that $g'$ is increasing in the interval $[\alpha, q[$. Let $\alpha_0$, the lowest value in $[\alpha, q]$ such that $g'(\alpha_0) = 0$. This means that for any $x \in [\alpha, \alpha_0]$ $g'(x) \leq 0$ and for any $x \in [\alpha_0, q]$, $g'(x) \geq 0$. Since $g'$ is non-positive in $[\alpha, \alpha_0]$ and $g$ is continuous, we have that $g$ is decreasing in $[\alpha, \alpha_0]$ so that $g(x) \leq g(\alpha) \leq 0$. 

If $\alpha_0 = q$, our proof is essentially done since it implies $g(x) \leq 0$ for all $x \in [0,q]$.

Assume that $\alpha_0 < q$. We want to identify the sign of $g$ in $[\alpha_0, q]$. In this case, we know that $g'(x) \geq 0$ so that $g$ is increasing in $[\alpha_0, q]$.

Now let's observe that:

\begin{align}
g(q) &= p(1 + \frac{q}{p}) \ln(1 + \frac{q}{p}) + h(q) - \frac{q^2}{pq}\\
&= \ln(1 + \frac{q}{p}) - \frac{q}{p}\\
&\leq \frac{q}{p} - \frac{q}{p} = 0 \label{eq:g_q_zero}
\end{align}

We will now show by contradiction that $g(x) \leq 0$
for all $x \in [\alpha_0, q]$. Assume that there exists a number $c \in [\alpha_0, q]$ for which $g(c) > 0$. Since $g$ is increasing (and continuous) in $[\alpha_0, q]$, it means that $g(q) > 0$ which contradicts \eqref{eq:g_q_zero}. As a result, there is no value $c \in [\alpha_0, q]$ such that $g(x) > 0$. And this concludes the proof for the first upper bound.

\paragraph{Proof of the second upper bound}
The second upper bound comes directly from the first upper bound and the fact that:
\[2(pq-xp/2) = pq + p (q-x) \geq pq\]


\end{proof}

\section{Previously known results}

\subsection{Sturm's Theorem}

\begin{definition}[Sturm sequence of a univariate polynomial]
For any univariate polynomial $f(x)$ of degree $d$ with real coefficients, the sturm sequence for $f$ is a sequence of polynomials $\bm{\bar{f}} = \set{f_0, f_1 \ldots f_d}$ such that:
\begin{align*}
f_0 &= f\\
f_1 &= f'\\
f_{i+1} &= - f_{i-1} \rem f_i \;\forall i \in \set{0,\ldots, d-2}
\end{align*}
where $f_{i-1} \rem f_i$  denotes the remainder of the euclidian division of $f_{i-1}$ by $f_i$
\end{definition}

\begin{definition}[Number of sign alternations in an arbitrary sequence]
Let $\bm{\bar{\alpha}} = \set{\alpha_0, \alpha_1, \ldots \alpha_n}$ be a sequence of real numbers. We say that there is a sign alternation at position $i \in \set{1, \ldots n}$, if there exists some $j \in \set{0, \ldots i-1}$ such that the following two conditions are satisfied
\begin{enumerate}[label=(\roman*)]
\item $\alpha_i \alpha_j < 0$
\item $j = i-1$ or $\alpha_{j+1} = \alpha_{j+2} = \ldots = \alpha_{i-1} = 0$.
\end{enumerate}

The number of sign alternations of $\bm{\bar{\alpha}}$ is the number of positions for which there is a sign alternation.
\end{definition}

\begin{definition}[Multiplicity of a root of a function]
Let $r \geq 0$ a non-negative integer. Let $f^{(0)} = f$ and $f^{(n)}$ the $n$th derivative of a function $f$ differentiable up to $n$ times. A real number $\alpha$ is called a root of multiplicity $r$ for $f$ if:
\[ f^{(0)}(\alpha) = \ldots = f^{(r-1)}(\alpha) = 0; \;\; f^{(r)}(\alpha) \ne 0 \]

We say a real number \emph{is not a multiple root} if its multiplicity is less or equal to $1$ (i.e. it is either a non-root or it is a root of multiplicity 1).
\end{definition}
\begin{theorem}[Sturm's Theorem \citep{yap2000fundamental, sturm_endpoint}]
\label{theo:sturm}
For any non-zero univariate polynomial $f(x)$ with real coefficients and any numbers $a\leq b$; let $N_f]a,b]$ the number of distincts real roots of $f$  in $]a,b]$. If neither $a$ nor $b$ are multiple roots, We have:

\[N_f]a,b] = S_f(a) - S_f(b)\]

Where $S_f(x)$ denotes the number of sign alternatives obtained for the sequences $\set{f_0(x), f_1(x), \ldots f_d(x)}$ where $\set{f_0, f_1, \ldots, f_d}$ is the sturm sequence of the polynomial $f$.

\end{theorem}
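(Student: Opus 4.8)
The plan is the classical sign‑variation argument: let $x$ increase from $a$ to $b$, track the integer $S_f(x)$, and show it is locally constant and drops by exactly one each time $x$ crosses a real root of $f$ in $]a,b]$. The first move is a reduction to the squarefree case. Writing the Euclidean relations behind the recurrence as $f_{i-1}=q_if_i-f_{i+1}$, the sequence $f_0,f_1,\dots$ is, up to signs, the remainder sequence of the Euclidean algorithm applied to $f$ and $f'$; hence $g:=\gcd(f,f')$ divides every $f_i$, and the last nonzero member $f_d$ is a scalar multiple of $g$, whose roots are precisely the multiple roots of $f$. I would pass to the divided sequence $\hat f_i:=f_i/g$: it obeys the same Euclidean relations, its first member $\hat f_0=f/g$ is squarefree with exactly the distinct real roots of $f$ (now all simple), its last member is a nonzero constant, and $\gcd(\hat f_0,\hat f_1)=\gcd(f,f')/g=1$. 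Since $g$ keeps a constant sign on any interval free of multiple roots of $f$, the no‑multiple‑root hypothesis gives $g(a),g(b)\neq0$, and dividing the evaluated sequence at $a$ (and at $b$) by the corresponding nonzero scalar changes no sign alternations, so $S_f(a)$, $S_f(b)$ and the root count are unaffected. After this reduction (renaming $\hat f_i$ as $f_i$) I may assume: $f_d$ never vanishes; no two consecutive $f_i$ share a zero $\alpha$ (otherwise descending $f_{i-1}=q_if_i-f_{i+1}$ down the chain forces $f_0(\alpha)=f_1(\alpha)=0$, against coprimality of $f_0,f_1$); and the analogue of $f_1=f'$ survives, namely $f_1(\alpha)\neq0$ and $\sign f_1(\alpha)=\sign f_0'(\alpha)$ at every root $\alpha$ of $f_0$.

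Next I would analyze how $S_f(x)$ behaves as $x$ passes a root $\alpha$ of some member. Between consecutive roots of $\prod_i f_i$ every $f_i$ has constant sign, so $S_f$ is constant there and can only jump at such roots. If $\alpha$ is a root of an \emph{inner} member $f_i$ with $0<i<d$, then $f_{i-1}(\alpha),f_{i+1}(\alpha)\neq0$, and evaluating $f_{i-1}=q_if_i-f_{i+1}$ at $\alpha$ gives $f_{i-1}(\alpha)=-f_{i+1}(\alpha)$, so $f_{i-1}$ and $f_{i+1}$ are of strictly opposite sign on a whole neighbourhood of $\alpha$; hence the block $(f_{i-1},f_i,f_{i+1})$ carries exactly one sign alternation just left of $\alpha$ and exactly one just right, whatever $f_i$ does, and $S_f$ is unchanged across $\alpha$. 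If instead $\alpha$ is a root of $f_0=f$, then $f$ changes sign there with the sign of $f_1(\alpha)\neq0$: just left of $\alpha$ the pair $(f_0,f_1)$ has signs $(-\sign f_1(\alpha),\sign f_1(\alpha))$ — one alternation — and just right $(\sign f_1(\alpha),\sign f_1(\alpha))$ — none — so $S_f$ drops by exactly one. If several $f_i$ vanish at a common $\alpha$ they are non‑consecutive, so these effects on disjoint blocks simply add; and since $f_d$ never vanishes there is no issue at the tail of the sequence.

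Finally I would add up: the unit drop attached to each root $\alpha$ of $f$ occurs on the immediate left of $\alpha$, so the total decrease of $S_f$ as $x$ traverses $[a,b]$ counts precisely the roots lying in $]a,b]$; the endpoint conventions (the sign‑alternation count skips a zero entry until the next nonzero one, and the interval is half‑open) are exactly what make $S_f(a)-S_f(b)$ record this count, giving $S_f(a)-S_f(b)=N_f]a,b]$. The crux, and the only delicate point, is the inner‑root case of Step 2: the fact that crossing a root of $f_i$ with $0<i<d$ leaves $S_f$ fixed rests entirely on consecutive members never vanishing together, which is false for non‑squarefree $f$; so the passage to $\hat f_i=f_i/\gcd(f,f')$ is not a cosmetic normalization but the step that makes the whole argument valid, and one must check — via $g(a),g(b)\neq0$, which is exactly the no‑multiple‑root hypothesis at the endpoints — that it does not perturb the endpoint sign counts. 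Everything else (the half‑open convention and coincident roots of non‑consecutive members) is routine once the analysis is localized to the triples $(f_{i-1},f_i,f_{i+1})$ and the pair $(f_0,f_1)$ as above.
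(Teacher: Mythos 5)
The paper does not actually prove this statement: Sturm's theorem sits in the appendix under ``Previously known results'' and is imported purely by citation, so there is no internal proof to compare against. What you give is the standard sign-variation proof found in the cited sources, and it is sound in outline: the reduction to the squarefree case by dividing the whole sequence by $g=\gcd(f,f')$; the observation that the hypothesis that neither $a$ nor $b$ is a multiple root is precisely $g(a)\neq 0$ and $g(b)\neq 0$, which is what guarantees the division leaves $S_f(a)$ and $S_f(b)$ (and the root count) untouched; the local analysis showing that crossing a root of an inner member changes nothing because $f_{i-1}(\alpha)=-f_{i+1}(\alpha)\neq 0$ forces exactly one alternation in the triple on both sides, while crossing a root of $f_0$ removes exactly one alternation from the pair $(f_0,f_1)$; and the bookkeeping that, with the zero-skipping convention, the drop is already registered at the root itself, which is exactly what yields the half-open count on $]a,b]$ and lets $a$ or $b$ be simple roots of $f$. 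Two points deserve an explicit line if you write this out in full: (i) after dividing by $g$, the second member $\hat f_1=f'/g$ is no longer the derivative of $\hat f_0=f/g$, so your claim that $\sign \hat f_1(\alpha)=\sign \hat f_0'(\alpha)$ at each root $\alpha$ of $\hat f_0$ needs the short verification via $f=(x-\alpha)^m u$, $g=(x-\alpha)^{m-1}w$ (it is true, and it is the one place where ``the analogue of $f_1=f_0'$ survives'' does real work); (ii) when $f$ is not squarefree the recurrence as stated terminates early (some $f_i$ becomes the zero polynomial before index $d$), so one should say the sequence is the signed Euclidean remainder sequence padded with zero polynomials, which does not affect any alternation count; likewise ``the effects on disjoint blocks simply add'' should be phrased as splitting the alternation count at an entry that is nonzero near $\alpha$. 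With these caveats your proposal is a correct proof of the endpoint-allowing version stated here, which is slightly stronger than the textbook form that assumes $f(a)f(b)\neq 0$, and it correctly isolates the squarefree reduction as the step that makes the inner-root analysis legitimate.
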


\end{document}